\keywords{Differentially Private Boosting, Decision Trees, Smooth Boosting}
\theoremstyle{plain} 
\begin{document}

\title{Private Boosted Decision Trees via Smooth Re-Weighting}

\author[V.~Asadi]{Vahid R. Asadi}	
\address{Department of Computer Science, Simon Fraser University, Burnaby, BC, Canada}	
\email{vasadi@sfu.ca}  

\author[M.~Carmosino]{Marco L. Carmosino}	
\address{Department of Computer Science, Boston University, Boston, MA, United States}	
\email{marco@ntime.org}  

\author[M.~Jahanara]{Mohammadmahdi Jahanara}	
\address{Department of Computer Science, Simon Fraser University, Burnaby, BC, Canada}	
\email{mjahanar@sfu.ca}  

\author[A.~Rafiey]{Akbar Rafiey}	
\address{Department of Computer Science, Simon Fraser University, Burnaby, BC, Canada}	
\email{arafiey@sfu.ca}  

\author[B.~Salamatian]{Bahar Salamatian}	
\address{Department of Computer Science, Simon Fraser University, Burnaby, BC, Canada}	
\email{bsalamat@sfu.ca}  





\begin{abstract}
  \noindent Protecting the privacy of people whose data is used by machine learning algorithms is important. Differential Privacy is the appropriate mathematical framework for formal guarantees of privacy, and boosted decision trees are a popular machine learning technique. So we propose and test a practical algorithm for boosting decision trees that guarantees differential privacy. Privacy is enforced because our booster never puts too much weight on any one example; this ensures that each individual's data never influences a single tree "too much." Experiments show that this boosting algorithm can produce better model sparsity and accuracy than other differentially private ensemble classifiers.
\end{abstract}

\maketitle
\section{Introduction}
\label{sec:introduction}

Boosted decision trees are a popular, widely deployed, and successful
machine learning technique. Boosting constructs an ensemble of
decision trees sequentially, by calling a decision tree \emph{base
  learner} with sample weights that ``concentrate attention'' on
training examples that are poorly classified by trees constructed so
far \cite{10.5555/2207821}.

\emph{Differential Privacy} (DP) is a mathematical definition of
privacy which ensures that the distribution over hypotheses produced
by a learning algorithm does not depend ``too much'' (quantified by
$\epsilon$) on any one input example
\cite{DBLP:conf/icalp/Dwork06}. An adversary cannot even tell if a
specific individual participated in a differentially private study or
not \citep[see][section ~IV.C.1]{630158}.

Recent purely theoretical work used \emph{Smooth Boosting} ---
algorithms that never concentrate too much sample weight on any one
example --- to give a simple and differentially private algorithm for
learning large-margin half-spaces \cite{DBLP:conf/colt/BunCS20}. Their
boosting algorithm is generic; it does not depend on any specific
features of the weak learner beyond differential privacy.

\subsection{Our contributions}
Here, we demonstrate that the smooth boosting algorithm of
\citet{DBLP:conf/colt/BunCS20} is a practical and efficient
differentially private classifier when paired with decision ``stumps''
--- depth-1 trees. We compare on three classification tasks to DP
logistic regression \cite{DBLP:journals/jmlr/ChaudhuriMS11}, DP
bagging \cite{DP-Bagging-JordonYS19}, DP gradient boosting
\cite{DBLP:conf/aaai/LiWWH20}, and smooth boosting over our own
``reference implementation'' of DP decision trees. In all cases,
smooth-boosted decision \emph{stumps} improve on other algorithms in
accuracy, model sparsity, or both in the high-privacy regime. This is
surprising; in the non-private setting somewhat deeper trees (depth 3
- 7) generally improve accuracy. It seems that stumps better tolerate
the amount of noise that must be added to enforce privacy for small
samples. Since many applications of DP (e.g., US Census sample
surveys, genetic data) require simple and accurate models for small
datasets, we regard the high utility of smooth-boosted DP-Stumps in
these settings as a \emph{pleasant} surprise. In order to analyze the privacy of our algorithm, we introduce the
notion of Weighted Exponential Mechanism and Weighted Return Noisy Max Mechanism based on
the novel notion of robust sensitivity which we believe is of independent interest.

\subsection{Related Work}
\label{sec:related-work}
Decision trees are one of the most popular classifiers; often used for
their efficiency and interpretability. Since the NP-completeness
result of \citet{NP-HyafilR76}, there has been an
extensive body of research devoted to designing heuristic algorithms
for inducing decision trees. These algorithms are efficient and
successful in practice \cite{doi:10.1142/9097}. Notable examples are
greedy procedures such as ID3, C4.5, and CART
\cite{DBLP:journals/ml/Quinlan86, DBLP:books/mk/Quinlan93,
  DBLP:books/wa/BreimanFOS84}. They iteratively ``grow'' a single tree
by adding children to some leaf node of an existing tree according to
a \emph{splitting criterion}.

\paragraph{\textbf{Differentially Private Decision Trees.}}
\label{sec:diff-priv-decis}
Many previous works explored differentially private algorithms for
learning \emph{single} decision trees. Authors in \citet{BlumDMN05}
showed how a traditional non-private algorithm (ID3) could be modified
to achieve differential privacy by adding noise to the splitting
criterion. \citet{FriedmanS10} empirically demonstrated the
effectiveness of using the exponential mechanism to privately select
splits for ID3 and C4.5.

Recent work modified the TopDown algorithm of Kearns and Mansour
\cite{KearnsM96} to enforce differential privacy \cite{Kaiwen}. This
is particularly interesting because TopDown is \emph{not} a
heuristic. Under a \emph{weak learning} assumption --- if the features
considered for splitting have some advantage over random guessing ---
TopDown is guaranteed to learn a tree with low training error. Wang,
Dick, and Balcan~\cite{Kaiwen} preserve this guarantee under
differential privacy by appealing to the utility of the Laplace
Mechanism. Here, we implement a simpler DP-TopDown algorithm --- as the
goal of our work is to test differentially private \emph{boosting,}
weaker tree induction is perferable.

\paragraph{\textbf{Differentially Private Boosting.}}
\label{sec:diff-priv-boost}

Differentially private boosting is less well-studied because the
iterative structure of boosting algorithms complicates the task of
enforcing privacy while maintaining utility. In theory,
\cite{dwork2010boosting} designed the first differentially private
boosting algorithm. Later, Bun, Carmosino, and
Sorrell~\cite{DBLP:conf/colt/BunCS20} offered a much simpler private
algorithm based on the hard-core lemma of \cite{BarakHK09}. Both
algorithms preserved privacy by using ``smooth'' distributions over
the sample to limit the ``attention'' any one example receives from a
base learner. Our \texttt{LazyBB} (Algorithm \ref{alg:LB-NxM}) is an
implementation of the algorithm of \cite{DBLP:conf/colt/BunCS20} over
decison trees and stumps.

\emph{Boosting by reweighting} updates an explicit distributions over the data, where the probability mass on an example reflects how difficult it is to classify. \emph{Gradient Boosting} iteratively fits the residuals of the combined voting classifier --- it alters the labels instead of explicit weights on each sample.

One very recent experimental work studies differentially private
\emph{gradient} tree boosting \cite{DBLP:conf/aaai/LiWWH20}. 
Their base learner is an ensemble of greedily-constructed
decision trees on disjoint subsets of the data, so that parallel
composition may be used \emph{inside} the base learner to save
privacy. They deal with the ``too much attention'' problem by
\emph{clipping} the pseudo-residuals at each round, so that outliers
do not compromise privacy by over-influencing the hypothesis at any
round. They use composition to spread the privacy budget across each
round of boosting.

Our algorithm is boosting by \emph{reweighting} and uses much
simpler base learners. Our update rule is just multiplicative weights,
and we enforce privacy by \emph{projecting} the resulting distribution
over examples into the space of smooth distributions. Our algorithm
remains accurate in the \emph{high-privacy} ($\epsilon < 1$) setting;
\cite{DBLP:conf/aaai/LiWWH20} did not explore this regime.

\section{Preliminaries}

\subsection{Distributions and Smoothness}
To preserve privacy, we will never concentrate too much ``attention''
on a single example. This can be enforced by only using \emph{smooth
  distributions} --- where no example is allowed to have too much
relative weight.

\begin{defi}[$\kappa$-Smooth Distributions]
  A probability distribution $\mu$ on domain $X$ is
  \emph{$\kappa$-smooth} if for each $x \in X$ we have
  $\mu(x) \leq \frac{1}{\kappa |X|}$, where $\kappa \in [0,1]$.
\end{defi}

To maintain the invariant that we only call base learners on smooth
distributions, we Bregman-project onto the space of \emph{high density
  measures}. High density measures\footnote{A \emph{measure} is a
  function from the domain to $[0,1]$ that need not sum to one;
  normalizing measures to total weight naturally results in a
  distribution.} correspond to smooth probability
distributions. Indeed, the measure $\mu$ over $X$ has density at least
$\kappa$ if and only if the probability distribution
$\frac{1}{|\mu|} \mu$ satisfies smoothness
$\mu(x) \leq \frac{|\mu|}{\kappa |X|}$ for all $x \in X$ where
$|\mu|=\sum_{x\in X}\mu(x)$ and density of $\mu$ is $|\mu|/|X|$.


\begin{defi}[Bregman Projection]
  Let $\Gamma \subseteq \R^{|S|}$ be a non-empty closed convex set of
  measures over $S$. The \emph{Bregman projection} of $\tilde{\mu}$
  onto $\Gamma$ is defined as:
  $
    \Pi_\Gamma \tilde{\mu} =
    \arg\min_{\mu \in \Gamma} \KL{\mu}{\tilde{\mu}}.
  $
\end{defi}

The result of Bregman 1967 says Bregman projections do not badly
``distort'' KL-divergence. Moreover, when $\Gamma$ is the set of
$\kappa$-dense measures we can compute $\Pi_\Gamma \tilde{\mu}$ for
measure $\tilde{\mu}$ with $|\unconstmu | < \kappa |X|$
\cite{BarakHK09}. Finally, we require the following notion of
similarity.

\begin{defi}[Statistical Distance]
  The \emph{statistical distance}, a.k.a. \emph{total variation
    distance}, between two distributions $\mu$ and $\nu$ on $\Omega$,
  denoted $\mathtt{d}(\mu, \nu)$, is defined as
  $
    \mathtt{d}(\mu,\nu) = \max_{S\subset \Omega} | \mu(S) -
    \nu(S)|.
  $
\end{defi}
For finite sets $\Omega$,
$\mathtt{d}(\mu,\nu) = \nicefrac{1}{2}\sum_{x\in \Omega}
|\mu(x)-\nu(x)|$ e.g., see Proposition 4.2 in \cite{levin2017markov}.


\subsection{Learning}
Throughout the paper we let $\mathcal{S} = \{(\mb{x}_i,y_i)\}^n$ where
$\mb{x}_i=(x_{i1},\dots , x_{ir})$ and $y_{i} \in \{+1,-1\}$ denote a
dataset where \emph{all} features and labels are Boolean. Though our
techniques readily extend to continuous-feature or multi-label
learning, studying this restricted classification setting simplifies
the presentation and experiments for this short paper.

\begin{defi}[Weak Learner]\label{def:weaklearn}
  Let $S \subset (\cX \times \{\pm 1\})^n$ be a training set of size
  $n$. Let $\mu$ be a distribution over $[n]$. A \emph{weak learning
    algorithm} with \emph{advantage} $\gamma$ takes $(S,\mu)$ as input
  and outputs a function $h \from \cX \to \{ \pm 1\}$ such that:
  $
    \Pr_{x \sim \mu}[h(x) = c(x)] \geq 1/2 + \gamma
  $
\end{defi}

\begin{defi}[Margin]
  For binary classification, the \emph{margin} (denoted $\sigma$) of
  an ensemble $H = h_{1}, \dots, h_{\tau}$ consisting of $\tau$
  hypotheses on an example $(x, y)$ is a number between $-\tau$ and
  $\tau$ that captures how ``right'' the classifier as a whole is
  $
    \sigma(H, x, y) = y\sum_{j=1}^{\tau}h(x).
  $
\end{defi}

\subsection{Differential Privacy}
The definition of differential privacy relies on the notion of
neighboring datasets. We say two datasets are neighboring if they
differ in a single record. We write $D \sim D'$ when two datasets $D$,
$D'$ are neighboring.

\begin{defi}[$(\epsilon, \delta)$-Differential Privacy
  \cite{dwork2006our}] For $\epsilon,\delta \in \R_+$, we say that a
  randomized computation $M$ is
  \emph{$(\epsilon,\delta)$-differentially private} if for any
  neighboring datasets $D \sim D'$, and for any set of outcomes
  $S \subseteq \mathrm{range}(M)$,
  \[
    \Pr[M(D)\in S] \leq \exp(\epsilon) \Pr[M(D')\in S]+\delta.
  \]
  When $\delta=0$, we say $M$ is \emph{$\epsilon$-differentially private}.
\end{defi}

Differentially private algorithms must be calibrated to the
sensitivity of the function of interest with respect to small changes
in the input dataset, defined formally as follows.

\begin{defi}[Sensitivity]
  The sensitivity of a function $F\colon X \to \R^k$ is
   $
    \max_{D\sim D' \in X}  ||F(D)-F(D')||_1.
  $
  A function with sensitivity $\Delta$ is called $\Delta$-sensitive.
\end{defi}



Two privacy composition theorems, namely sequential composition and
parallel composition, are widely used in the design of mechanisms.
\begin{thm}[Sequential Composition \cite{bun2016concentrated,
    dwork2009differential, dwork2010boosting, mcsherry2007mechanism}]
  \label{thm:Sequential-Composition} Suppose a set of privacy
  mechanisms $M = \{M_1,\dots,M_k \}$ are sequentially performed on a
  dataset, and each $M_i$ is $(\epsilon_i,\delta_i)$-differentially
  private with $\epsilon_i\leq \epsilon_0$ and
  $\delta_i \leq \delta_0$ for every $1\leq i\leq k$. Then mechanism
  $M$ satisfies $(\epsilon,\delta)$-differential privacy where
  \begin{itemize}
  \item $\epsilon=k\epsilon_0$ and $\delta=k\delta_0$ (the basic
    composition), or
  \item
    $\epsilon =
    \sqrt{2k\ln{1/\delta'}}\epsilon_0+k\epsilon_0(\mr{e}^{\epsilon_0}-1)$
    and $\delta =\delta' + k\delta_0$ for any $\delta' > 0$ (the
    advanced composition).
  \end{itemize}
\end{thm}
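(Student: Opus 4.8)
}
The plan is to handle the two bullets separately, using in both the \emph{adaptive} structure of sequential composition: view a full run as a transcript $Y=(Y_1,\dots,Y_k)$ with $Y_i=M_i(D;\,Y_1,\dots,Y_{i-1})$, and note that for every fixed prefix $y_{<i}=(y_1,\dots,y_{i-1})$ the residual map $D\mapsto M_i(D;y_{<i})$ is $(\epsilon_i,\delta_i)$-differentially private.

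\emph{Basic composition.} Suppose first that $\delta_0=0$. Fix neighbors $D\sim D'$ and a transcript set $S$; I would expand $\Pr[M(D)\in S]$ over transcripts and peel off coordinates one at a time. The $i$-th conditional density ratio $\Pr[Y_i=y_i\mid y_{<i},D]\,/\,\Pr[Y_i=y_i\mid y_{<i},D']$ is at most $\mr{e}^{\epsilon_i}\le\mr{e}^{\epsilon_0}$ by $\epsilon_i$-privacy of the $i$-th residual mechanism, so the product over $i=1,\dots,k$ contributes a factor $\mr{e}^{\sum_i\epsilon_i}\le\mr{e}^{k\epsilon_0}$, i.e.\ $(k\epsilon_0,0)$-privacy. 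To reinstate $\delta_0>0$, I would invoke the standard reduction that isolates, for each step and each prefix, a failure event of probability at most $\delta_0$ outside of which the step behaves like an $\epsilon_0$-private step; a union bound over the $k$ steps adds $k\delta_0$, giving $(k\epsilon_0,k\delta_0)$-privacy.

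\emph{Advanced composition.} Again I would reduce to the pure-DP case first, paying $k\delta_0$ up front, so that it suffices to show the adaptive composition of $k$ mechanisms that are each $\epsilon_0$-private is $(\epsilon,\delta')$-private with $\epsilon=\sqrt{2k\ln(1/\delta')}\,\epsilon_0+k\epsilon_0(\mr{e}^{\epsilon_0}-1)$. Draw $Y\sim M(D)$ and define the per-step privacy loss $L_i=\ln\frac{\Pr[Y_i=y_i\mid Y_{<i},D]}{\Pr[Y_i=y_i\mid Y_{<i},D']}$ and $L=\sum_{i=1}^k L_i$. Pure privacy forces $|L_i|\le\epsilon_0$ pointwise, and conditioned on $Y_{<i}$ the expectation $\mathbf{E}[L_i\mid Y_{<i}]$ is the KL divergence between the two conditional output laws --- which are indistinguishable up to a factor $\mr{e}^{\epsilon_0}$ --- so it is at most $\epsilon_0(\mr{e}^{\epsilon_0}-1)$. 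Hence the centered increments $L_i-\mathbf{E}[L_i\mid Y_{<i}]$ form a bounded martingale difference sequence, and an Azuma--Hoeffding-type inequality gives $\Pr[L>k\epsilon_0(\mr{e}^{\epsilon_0}-1)+\epsilon_0\sqrt{2k\ln(1/\delta')}]\le\delta'$. Finally, for any transcript set $S$, $\Pr[M(D)\in S]=\Pr[M(D)\in S,\,L\le\epsilon]+\Pr[M(D)\in S,\,L>\epsilon]\le\mr{e}^{\epsilon}\Pr[M(D')\in S]+\delta'$, which is $(\epsilon,\delta')$-privacy; combining with the $k\delta_0$ from the reduction yields $\delta=\delta'+k\delta_0$. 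This martingale argument is essentially that of \citet{dwork2010boosting}.

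\emph{Main obstacle.} The telescoping for basic composition is routine. The two delicate points are: (i) the reduction trading $(\epsilon_0,\delta_0)$-privacy for $\epsilon_0$-privacy plus a $\delta_0$-mass failure event, which must be made to hold uniformly over \emph{all} neighbor pairs and \emph{all} prefixes before the union bound over the $k$ steps is justified; and (ii) checking that the privacy loss of an \emph{adaptive} composition is genuinely a sum of bounded martingale differences (rather than of independent variables), so that the concentration inequality applies with the constant $\sqrt{2k\ln(1/\delta')}\,\epsilon_0$. Once the pointwise bound $|L_i|\le\epsilon_0$ and the conditional-expectation bound $\epsilon_0(\mr{e}^{\epsilon_0}-1)$ are in hand, the remaining concentration-plus-tail-splitting is mechanical.
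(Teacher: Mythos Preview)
The paper does not prove this theorem at all: it is stated in the Preliminaries section as a cited background result from \cite{bun2016concentrated, dwork2009differential, dwork2010boosting, mcsherry2007mechanism}, with no accompanying argument. So there is no ``paper's own proof'' to compare against.

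That said, your proposal is the standard argument from the cited literature --- the telescoping over transcripts for basic composition and the privacy-loss martingale with Azuma--Hoeffding for advanced composition is exactly the approach of \citet{dwork2010boosting}, and your identification of the two delicate points (the $(\epsilon_0,\delta_0)\to\epsilon_0$ reduction that must hold uniformly over prefixes, and the martingale structure of the adaptive privacy loss) is accurate. If you were asked to supply a proof here, what you have sketched would be correct and complete in outline.
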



\begin{thm}[Parallel Composition~\cite{McSherry10}]
  \label{thm:parallel-composition}
  Let $D_1,\dots,D_k$ be a partition of the input domain and suppose
  $M_1,\dots, M_k$ are mechanisms so that $M_i$ satisfies
  $\epsilon_i$-differential privacy. Then the mechanism
  $M(S) = (M_1(S \cap D_1), \dots , M_k(S \cap D_k))$ satisfies
  $(\max_i \epsilon_i)$-differential privacy.
\end{thm}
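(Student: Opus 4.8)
The plan is to exploit the fact that a partition localizes any single-record change to exactly one block. Fix neighboring datasets $S \sim S'$ differing in a single record $z$. Because $D_1, \dots, D_k$ partition the domain, $z$ lies in exactly one block, say $D_j$. Hence $S \cap D_i = S' \cap D_i$ for every $i \neq j$, while $S \cap D_j$ and $S' \cap D_j$ are themselves neighboring, differing only in $z$. This is the only structural observation the proof needs.

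The second step uses the independence of the mechanisms $M_1, \dots, M_k$ to factor the joint output distribution. For a product (rectangle) event $T = T_1 \times \cdots \times T_k$ with $T_i \subseteq \mathrm{range}(M_i)$,
\[
  \Pr[M(S) \in T] = \prod_{i=1}^{k} \Pr\!\left[M_i(S \cap D_i) \in T_i\right].
\]
For each $i \neq j$ the $i$-th factor is literally unchanged when $S$ is replaced by $S'$, since these mechanisms see identical inputs; for $i = j$ we invoke $\epsilon_j$-differential privacy of $M_j$ on the neighboring inputs $S \cap D_j \sim S' \cap D_j$, bounding that single factor by $e^{\epsilon_j}$ times the corresponding factor under $S'$. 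Multiplying over all $i$ yields $\Pr[M(S) \in T] \leq e^{\epsilon_j}\Pr[M(S') \in T] \leq e^{\max_i \epsilon_i}\Pr[M(S') \in T]$.

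The third step lifts the inequality from rectangles to an arbitrary measurable event $T \subseteq \mathrm{range}(M)$. In the discrete setting this is immediate: decompose $T$ into singletons and sum the rectangle bounds. In general, one argues via the product-measure structure of the law of $M(S)$, whose density against the law of $M(S')$ factors coordinatewise so that every factor but the $j$-th is identically $1$ and the $j$-th is almost surely at most $e^{\epsilon_j}$; integrating this density over $T$ gives the claim. Since each $M_i$ has $\delta_i = 0$, no additive slack enters, so $M$ is $(\max_i \epsilon_i)$-differentially private. I expect the only step requiring genuine care is the first — verifying that a single-record change to $S$ really leaves $S \cap D_i$ untouched for all $i \neq j$; everything afterward is just independence plus one coordinatewise application of the privacy guarantee.
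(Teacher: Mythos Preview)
The paper does not supply its own proof of this theorem; it is stated as a known result with a citation to McSherry and used as a black box later in the privacy analyses. Your argument is the standard proof of parallel composition and is correct: the key structural point is that a single-record change affects exactly one block of the partition, after which independence and one application of $\epsilon_j$-DP finish the job.
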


\subsection{Differentially Private Learning}
\label{sec:DP-Learning}
Given two neighboring datasets and \emph{almost} the same
distributions on them, privacy requires weak learners to output the same
hypothesis with high probability. This idea was formalized for 
zero-concentrated differential privacy (zCDP) in Definition 18 of \cite{DBLP:conf/colt/BunCS20}.
Below, we adapt it for the $(\epsilon,\delta)$-DP setting.

\begin{defi}[DP Weak Learning]
  \label{def:ed-private-wkl}
  A weak learning algorithm $\mathtt{WkL} : S \times \cD(S) \to \cH$
  is $(\epsilon, \delta, \zeta)$-differentially private if for all
  neighboring samples $S \sim S' \in (\cX^n \times \{ \pm 1\})$ and
  all $H\subseteq \cH$, and any pair of distributions
  $\dst{\mu}, \dst{\mu}'$ on $[n]$ with
  $\mathtt{d}(\dst{\mu},\dst{\mu}') < \zeta$, we have:
  \[
    \Pr[\mathtt{WKL}(S,\dst{\mu}) \in H ] \le
    \exp(\epsilon) \Pr[\mathtt{WKL}(S',\dst{\mu}') \in H] + \delta.
  \]
\end{defi}

Note that the notion of sensitivity for differentially private weak
learners depends on the promised total variation distance
$\zeta$. Hence, differentially private weak learners must be
calibrated to the sensitivity of the function of interest with respect
to small changes in the distribution on the dataset. For this purpose, we introduce
\emph{robust sensitivity} below. There is no analog of
robust sensitivity in the zCDP setting of \citet{DBLP:conf/colt/BunCS20}, 
because their private weak learner for halfspaces did not require it --- they 
exploited inherent ``compatibility'' between Gaussian noise added to preserve
privacy and the geometry of large-margin halfspaces. We do not have this luxury in
the $(\epsilon,\delta)$-DP setting, and so must reason directly about how
the accuracy of each potential weak learner changes with the distribution over examples. 

\begin{defi}[Robust Sensitivity]
  The robust sensitivity of a function
  $F\colon (X,\mathsf{M}) \to \R^k$ where $\mathsf{M}$ is the set of
  all distributions on $X$ is defined as
  $$
    \max\limits_{\substack{D\sim D' \in X\\ \dst{\mu},\dst{\mu}'\in \mathsf{M}: \mathtt{d}(\dst{\mu},\dst{\mu}') < \zeta}}  ||F(D,\dst{\mu}(D))-F(D',\dst{\mu}'(D'))||_1.
  $$
  A function with robust sensitivity $\Delta_\zeta$ is called
  $\Delta_\zeta$ robustly sensitive.
\end{defi}

The standard Exponential Mechanism \cite{mcsherry2007mechanism} does
not consider utility functions with an auxiliary weighting $\mu$. But
for weak learning we only demand privacy (close output distributions)
when \emph{both} the dataset and measures are ``close.'' When both
promises hold and $\mu$ is fixed, the Exponential Mechanism is indeed
a differentially private weak learner; see the Appendix for a proof.

\begin{defi}[Weighted Exponential Mechanism]
  \label{def:WEM}
  Let $\eta > 0$ and let $q_{D,\mu} \colon \mathcal{H}\to \R$ be a
  quality score.  Then, the \emph{Weighted Exponential Mechanism}
  ${WEM}({\eta,q_{D,\mu}})$ outputs $h \in \mathcal{H}$ with probability
  proportional to $ \exp\left(\eta\cdot q_{D,\mu}(h)\right).$
\end{defi}

Similar to the Exponential Mechanism one can prove  privacy and utility guarantee for the Weighted Exponential Mechanism.
\begin{thm}
  \label{thm:weighted-EM}
  Suppose the quality score $q_{D,\mu} \colon \mathcal{H}\to \R$ has
  robust sensitivity $\Delta_\zeta$. Then, ${WEM}({\eta,q_{D,\mu}})$
  is $(2\eta\Delta_\zeta,0,\zeta)$-differentially private weak
  learner. Moreover, for every $\beta \in (0,1)$, ${WEM}({\eta,q_{D,\mu}})$
  outputs $h\in \mathcal{H}$ so that
  \[
    \Pr\left[q_{D,\mu}(h) \geq \max_{h' \in \mathcal{H}} q_{D,\mu}(h')-\ln\left({|\mathcal{H}|}/{\beta}\right)/\eta\right] \geq 1-\beta.
  \]
\end{thm}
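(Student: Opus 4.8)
The plan is to follow the classical McSherry--Talwar analysis of the Exponential Mechanism, observing that the only structural change here is that the sensitivity bound we feed in is \emph{robust} sensitivity, which by definition already quantifies over simultaneously replacing $D$ by a neighbor $D'$ \emph{and} replacing $\mu$ by any $\mu'$ with $\mathtt{d}(\dst{\mu},\dst{\mu}') < \zeta$ --- exactly the pair of perturbations appearing in Definition~\ref{def:ed-private-wkl}. So the robust sensitivity bound hands us, for every $h \in \mathcal{H}$, the inequality $|q_{D,\mu}(h) - q_{D',\mu'}(h)| \le \Delta_\zeta$, and the rest is bookkeeping with normalizing constants.

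For the privacy claim, fix neighboring $D \sim D'$, measures $\dst{\mu},\dst{\mu}'$ with $\mathtt{d}(\dst{\mu},\dst{\mu}') < \zeta$, and a single $h \in \mathcal{H}$. Writing $Z_{D,\mu} = \sum_{h' \in \mathcal{H}} \exp(\eta\, q_{D,\mu}(h'))$, the pointwise ratio of output densities factors as
\[
\frac{\Pr[WEM(\eta,q_{D,\mu}) = h]}{\Pr[WEM(\eta,q_{D',\mu'}) = h]} = \frac{\exp(\eta\, q_{D,\mu}(h))}{\exp(\eta\, q_{D',\mu'}(h))}\cdot\frac{Z_{D',\mu'}}{Z_{D,\mu}}.
\]
First I would bound the first factor by $\exp(\eta\Delta_\zeta)$ directly from the robust sensitivity inequality above. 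Then I would bound the second factor by $\exp(\eta\Delta_\zeta)$ as well: term by term, $\exp(\eta\, q_{D',\mu'}(h')) \le \exp(\eta\Delta_\zeta)\exp(\eta\, q_{D,\mu}(h'))$, and this is preserved under summation over $h' \in \mathcal{H}$, so $Z_{D',\mu'} \le \exp(\eta\Delta_\zeta)\, Z_{D,\mu}$. Multiplying the two factors gives a pointwise density ratio of at most $\exp(2\eta\Delta_\zeta)$; summing this inequality over $h \in H$ for an arbitrary $H \subseteq \mathcal{H}$ yields $\Pr[WEM(\eta,q_{D,\mu}) \in H] \le \exp(2\eta\Delta_\zeta)\Pr[WEM(\eta,q_{D',\mu'}) \in H]$ with no additive slack, which is precisely $(2\eta\Delta_\zeta, 0, \zeta)$-DP weak learning in the sense of Definition~\ref{def:ed-private-wkl}.

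For the utility claim, fix $D$ and $\mu$, abbreviate $q = q_{D,\mu}$ and $\mathrm{OPT} = \max_{h' \in \mathcal{H}} q(h')$, set $t = \ln(|\mathcal{H}|/\beta)/\eta$, and let $B = \{h \in \mathcal{H} : q(h) < \mathrm{OPT} - t\}$ be the ``bad'' outputs. I would lower-bound the normalizer by a single near-optimal hypothesis, $Z_{D,\mu} \ge \exp(\eta\,\mathrm{OPT})$, and upper-bound the mass placed on $B$ by $\frac{|B|\exp(\eta(\mathrm{OPT}-t))}{\exp(\eta\,\mathrm{OPT})} = |B|\exp(-\eta t) \le |\mathcal{H}|\exp(-\eta t) = \beta$, using $\exp(-\eta t) = \beta/|\mathcal{H}|$. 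Hence $\Pr[q(h) \ge \mathrm{OPT} - t] = 1 - \Pr[h \in B] \ge 1-\beta$, which is the stated guarantee.

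The step I expect to need the most care is the privacy direction, specifically confirming that robust sensitivity is exactly the right quantity to substitute --- i.e., that its simultaneous quantification over $D \sim D'$ and $\mathtt{d}(\dst{\mu},\dst{\mu}')<\zeta$ lines up with the quantification in Definition~\ref{def:ed-private-wkl}, so that bounding the numerator and the normalizer separately (each costing $\exp(\eta\Delta_\zeta)$) is genuinely the whole argument and nothing is double-counted or missed. Once that correspondence is pinned down, everything else is the textbook Exponential Mechanism computation.
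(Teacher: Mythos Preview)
Your proposal is correct and matches the paper's proof essentially line for line: the paper also factors the probability ratio into the exponential-weight ratio and the normalizer ratio, bounds each by $\exp(\eta\Delta_\zeta)$ via the robust-sensitivity inequality, and for utility defers to the standard Exponential Mechanism argument that you spell out. Your observation that robust sensitivity is precisely the ``right'' substitute because its quantification matches Definition~\ref{def:ed-private-wkl} is exactly the point the paper is making.
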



Another differentially private mechanism that we use is Weighted
Return Noisy Max (WRNM). Let $f_1,\dots,f_k$ be $k$ quality functions
where each $f_i:S\times \cD(S)\to \R$ maps datasets and distributions
over them to real numbers. For a dataset $S$ and distribution $\mu$
over $S$, WRNM adds independently generated Laplace noise
$Lap(1/\eta)$ to each $f_i$ and returns the index of the largest noisy
function i.e. $i^*=\argmax\limits_{i} (f_i+Z_i)$ where each $Z_i$
denotes a random variable drawn independently from the Laplace
distribution with scale parameter $1/\eta$.

\begin{thm}
  \label{thm:weighted-RNM}
  Suppose each $f_i$ has robust sensitivity at most $\Delta_\zeta$. Then
  WRNM is a $(2\eta\Delta_\zeta,0,\zeta)$-differentially private weak
  learner.
\end{thm}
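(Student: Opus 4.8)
The plan is to adapt the classical ``Report Noisy Max'' argument (\cite{mcsherry2007mechanism}, see also Theorem~\ref{thm:weighted-EM}) to the weighted, robust-sensitivity setting. Being a $(2\eta\Delta_\zeta,0,\zeta)$-differentially private weak learner means satisfying Definition~\ref{def:ed-private-wkl}, so I would fix neighboring samples $S \sim S'$ and distributions $\dst{\mu},\dst{\mu}'$ with $\mathtt{d}(\dst{\mu},\dst{\mu}') < \zeta$, and abbreviate $a_i := f_i(S,\dst{\mu}(S))$ and $b_i := f_i(S',\dst{\mu}'(S'))$. The robust-sensitivity hypothesis then gives $|a_i - b_i| \le \Delta_\zeta$ for every $i$, and since this pair of promises ($S \sim S'$ together with $\zeta$-closeness of the measures) is exactly the quantification range of robust sensitivity, nothing stronger is being assumed. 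Because $\delta = 0$, because the output distribution over $\cH$ is completely determined by the distribution over the returned index, and because any event ``output lies in $H$'' is a disjoint union of events ``index $i$ is returned'', it suffices to prove, for each fixed index $i$,
\[
  \Pr[\mathrm{WRNM}(S,\dst{\mu}) = i] \;\le\; \exp(2\eta\Delta_\zeta)\cdot\Pr[\mathrm{WRNM}(S',\dst{\mu}') = i].
\]
Ties have probability zero under continuous Laplace noise and may be resolved by any fixed rule.

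First I would condition on the noise vector $z_{-i} = (z_j)_{j \ne i}$ on the remaining coordinates. Index $i$ is returned on input $S$ exactly when $Z_i \ge t^S := \max_{j \ne i}(a_j - a_i + z_j)$, and on input $S'$ exactly when $Z_i \ge t^{S'} := \max_{j \ne i}(b_j - b_i + z_j)$. Comparing the two thresholds term by term, each quantity $b_j - b_i$ exceeds $a_j - a_i$ by at most $2\Delta_\zeta$ (moving between $S$ and $S'$ can lower $f_i$ by $\Delta_\zeta$ while raising $f_j$ by $\Delta_\zeta$), and symmetrically, so $|t^S - t^{S'}| \le 2\Delta_\zeta$. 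The second ingredient is the elementary one-sided tail estimate for $Z \sim Lap(1/\eta)$: for every $t$ and every $s \ge 0$,
\[
  \Pr[Z \ge t - s] \;\le\; \exp(\eta s)\cdot\Pr[Z \ge t],
\]
which follows from $|u - s| \ge |u| - s$ applied to the Laplace density after the substitution $u = x + s$; the tail function is moreover non-increasing. Combining these: if $t^S \ge t^{S'}$ the per-index inequality is immediate by monotonicity, and if $t^S < t^{S'}$ then $t^{S'} - t^S \in (0, 2\Delta_\zeta]$ and the tail estimate yields $\Pr[Z_i \ge t^S] \le \exp(2\eta\Delta_\zeta)\Pr[Z_i \ge t^{S'}]$. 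Integrating this pointwise bound over $z_{-i}$ against the common product density of the remaining noises gives the displayed inequality, hence $(2\eta\Delta_\zeta,0,\zeta)$-differential privacy.

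I expect the main obstacle to be bookkeeping rather than any hard estimate. The two things to be careful about are: (i) invoking robust sensitivity only on pairs that are simultaneously neighboring \emph{and} $\zeta$-close in total variation, so the hypothesis of the theorem is used exactly as stated; and (ii) tracking that it is the \emph{difference} $a_j - a_i$ between two quality scores, not a single $f_i$, that can move by $2\Delta_\zeta$ when passing to the neighbor --- this is the source of the factor $2$ in the privacy parameter. Everything else is the standard Laplace-tail calculation above, so in effect the statement is ``Report Noisy Max'' with robust sensitivity substituted for ordinary sensitivity.
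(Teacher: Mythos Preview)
Your proposal is correct and follows essentially the same approach as the paper: both arguments adapt the standard Report Noisy Max proof by fixing an index $i$, conditioning on the remaining noises $Z_{-i}$, using robust sensitivity to bound the shift in the winning threshold by $2\Delta_\zeta$, and then applying the Laplace tail comparison. The only cosmetic difference is that you introduce two thresholds $t^S,t^{S'}$ and split into cases, whereas the paper defines a single threshold $Z^*$ for $(S,\mu)$ and shows $Z^*+2\Delta_\zeta$ suffices for $(S',\mu')$; these are the same computation.
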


\section{Private Boosting}
\label{sec:PrivateBoosting}
Our boosting algorithm, Algorithm~\ref{alg:LB-NxM}, simply calculates the current margin of each
example at each round, exponentially weights the sample accordingly,
and then calls a private base learner with smoothed sample weights.
The hypothesis returned by this base learner is added to the ensemble
$H$, then the process repeats. Privacy follows from (advanced)
composition and the definitions of differentially private weak
learning. Utility (low training error) follows from regret bounds for
lazy projected mirror descent and a reduction of boosting to zero-sum
games. Theorem \ref{thm:WBregBoost} formalizes these guarantees; for
the proof, see \cite{DBLP:conf/colt/BunCS20}. Next, we discuss the
role of each parameter.

\textbf{Round Count $\tau$.~} The number of base hypotheses.  In the
non-private setting, $\tau$ is like a regularization parameter --- we
increase it until just before overfitting is observed.  In the private
setting, there is an additional trade-off: more rounds \emph{could}
decrease training error until the amount of noise we must inject into
the weak learner at each round (to preserve privacy) overwhelms
progress.

\textbf{Learning rate $\lambda$.~} Exponential weighting is
attenuated by a \emph{learning rate} $\lambda$ to ensure that weights
do not shift too dramatically between calls to the base
learner. $\lambda$ appears negatively because the margin is negative
when the ensemble is incorrect. Signs cancel to make the weight on an
example \emph{larger} when the committee is bad, as desired.

\textbf{Smoothness $\kappa$.~} Base learners attempt to maximize their
probability of correctness over each intermediate
distribution. Suppose the $t$-th distribution was a point mass on
example $x_i$ --- this would pose a serious threat to privacy, as
hypothesis $h_t$ would only contain information about individual
$x_i$! We ensure this never happens by invoking the weak learner only
over $\kappa$-smooth distributions: each example has probability mass
``capped'' at $\frac{1}{\kappa n}$. For larger samples, we have
smaller mass caps, and so can inject less noise to enforce privacy.
Note that by setting $\kappa = 1$, we force each intermediate 
distribution to be uniform, which would entirely negate the effects
of boosting: reweighting would simply be impossible.
Conversely, taking $\kappa \rightarrow 0$ will entirely remove the smoothness constraint.

\begin{algorithm}[tb]
  \caption{\texttt{LazyBB}: Weighted Lazy-Bregman Boosting}
  \label{alg:LB-NxM}
  \emph{\textbf{Parameters:}} $\kappa \in (0,1)$,
  desired training error; $\lambda \in (0,1)$, learning rate; 
  $\tau \in \mathbb{N}$ number of rounds \\
  \emph{\textbf{Input:}} $S \in X^n$, the sample; 
  \begin{algorithmic}
    \STATE $H\gets \emptyset$ and $\mu_1(i) \gets \kappa ~~ \forall i \in [n]$
    \COMMENT{Uniform bounded measure}
    \FOR{$t = 1$ to $\tau$}
    \STATE{$\hat{\mu}_t \gets $} Normalize $\mu_t$ to a distribution
    \COMMENT{Obtaining a $\kappa$-smooth distribution}
    \STATE{$h_t \gets \mathtt{WkL}(S,\hat{\mu}_t)$}
    \STATE $H\gets H\cup\{h_t\}$
    \STATE $\sigma_{t}(i) \gets y_{i} \sum_{j=1}^{t}h_{j}(x_{i})
    ~~ \forall i \in [n]$
    \COMMENT{Normalized score of current majority vote}
    \STATE $\tilde{\mu}_{t+1}(i) \gets
    \exp\left( -\lambda \sigma_{t}(i) \right) \kappa
    ~~ \forall i \in [n]$
    \STATE $\mu_{t+1} \gets \Pi_{\Gamma}(\tilde{\mu}_{t+1})$
    \COMMENT{Bregman project to a $\kappa$-dense measure}
    \ENDFOR
    \STATE {\bfseries Output:} {$\hat{f}(x) = \operatorname{Maj}_{h_{j} \in H}\left[ h_{j}(x)\right]$}
  \end{algorithmic}
\end{algorithm}

\begin{thm}[Privacy \& Utility of \texttt{LazyBB}]
  \label{thm:WBregBoost}
  Let $L$ be a $(\epsilon_b, \delta_b, (1/\kappa n))$-DP weak learner
  with advantage $\gamma$ and failure probability $\beta$ for concept
  class $\cH$.  Running \texttt{LazyBB} with $L$ for
  $\tau \geq \frac{16\log{(1/\kappa)}}{\gamma^2}$ rounds on a sample
  of size $n$ with $\lambda = \gamma/4$ guarantees:
  \begin{description}
  \item[Privacy] \texttt{LazyBB} is
    $(\epsilon_{A}, \delta_{A})$-DP, where
$\epsilon_{A} = \sqrt{2 \tau\cdot \ln(1 / \delta')} \cdot \epsilon_{b}
      + \tau \cdot \epsilon_{b}\cdot (\exp(\epsilon_{b}) - 1)$ and $\delta_{A} = \tau\cdot\delta_{b} + \delta'$ for every $\delta' > 0$ (using advanced composition).


  \item[Utility] With all but $(\tau \cdot \beta)$ probability, $H$ has
    at least $\gamma$-good normalized margin on a $(1-\kappa)$
    fraction of $S$ i.e., 
    $
      \Pr_{(x,y) \sim S} \Big[
        \nicefrac{y}{\tau}\sum_{j = 1}^{\tau} h_{j}(x) \leq \gamma
        \Big] \leq \kappa.
    $
  \end{description}  
\end{thm}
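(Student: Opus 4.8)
The plan is to establish the two assertions separately. \textbf{Privacy} is a composition argument that holds on every run of \texttt{LazyBB} and does not use the weak‑learning guarantee, while \textbf{Utility} is the standard reduction of smooth boosting to a zero‑sum game, conditioned on all $\tau$ weak‑learner calls succeeding. The utility part is essentially the analysis of \cite{DBLP:conf/colt/BunCS20}; the genuinely new work is on the privacy side, because our weak learner's sensitivity is controlled only in the sense of Definition~\ref{def:ed-private-wkl}.

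For privacy, fix neighbouring samples $S \sim S'$ that differ only in the record with some index $k$, and regard the execution as an adaptive composition of mechanisms $M_1, \dots, M_\tau$, where $M_t$ takes the history $h_1, \dots, h_{t-1}$, forms the smoothed distribution $\hat\mu_t$ deterministically from $S$ and that history, and returns $\mathtt{WkL}(S, \hat\mu_t)$. By sequential composition (Theorem~\ref{thm:Sequential-Composition}) it suffices to show that each $M_t$, as a function of the sample with the history held fixed, is $(\epsilon_b, \delta_b)$-DP; the stated $(\epsilon_A, \delta_A)$ is then exactly advanced composition over the $\tau$ rounds. By the definition of a $(\epsilon_b, \delta_b, \nicefrac{1}{\kappa n})$-DP weak learner (Definition~\ref{def:ed-private-wkl}) this reduces to checking that $\mathtt{d}(\hat\mu_t, \hat\mu_t') \le \nicefrac{1}{\kappa n}$, where $\hat\mu_t'$ is the distribution $M_t$ would compute on $S'$ from the same history. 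Since $\sigma_{t-1}(i)$ depends on the sample only through $(\mb{x}_i, y_i)$ and the common history, the pre‑projection reweighted measures $\tilde\mu_t$ and $\tilde\mu_t'$ agree off coordinate $k$. I would then invoke the closed form of the Bregman projection onto the $\kappa$-dense measures from \cite{BarakHK09}: one scales the measure up by a common factor $e^{c} \ge 1$ and clips entries at $1$, with $c$ chosen to make the total mass $\kappa n$ (and $c = 0$ if the measure is already $\kappa$-dense). Decreasing one coordinate can only increase $c$, so after projection every \emph{other} coordinate weakly moves in one direction; after renormalising, the signed differences $\hat\mu_t(i) - \hat\mu_t'(i)$ over $i \ne k$ all share a sign, so $\sum_{i \ne k} |\hat\mu_t(i) - \hat\mu_t'(i)| = |\hat\mu_t(k) - \hat\mu_t'(k)| \le \nicefrac{1}{\kappa n}$ because both distributions are $\kappa$-smooth and hence place mass at most $\nicefrac{1}{\kappa n}$ on $k$. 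Summing the two contributions gives $\mathtt{d}(\hat\mu_t, \hat\mu_t') \le \nicefrac{1}{\kappa n}$. I expect this projection‑stability claim to be the one real obstacle; the degenerate cases where one or both projections are inactive are routine.

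For utility, first union‑bound: each call to $\mathtt{WkL}$ fails with probability at most $\beta$, so all $\tau$ calls succeed except with probability $\tau\beta$, and we condition on that. Writing $g_t(i) = y_i h_t(\mb{x}_i) \in \{\pm 1\}$, the advantage guarantee $\Pr_{i \sim \hat\mu_t}[h_t(\mb{x}_i) = y_i] \ge \tfrac12 + \gamma$ is exactly $\langle \hat\mu_t, g_t \rangle \ge 2\gamma$ for every $t$, hence $\frac{1}{\tau}\sum_t \langle \hat\mu_t, g_t \rangle \ge 2\gamma$. Next I would observe that normalising the $\kappa$-dense projection $\mu_t$ of the measure $\tilde\mu_t(i) \propto \exp(-\lambda\sigma_{t-1}(i))$ makes $\hat\mu_t$ the iterate of lazy (dual‑averaging) entropic mirror descent over the polytope of $\kappa$-smooth distributions, driven by the losses $g_t$. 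Its regret against any fixed $\kappa$-smooth distribution $\nu$ is at most $\frac{\mathrm{KL}(\nu \,\|\, \mathrm{unif})}{\lambda} + O(\lambda\tau) \le \frac{\ln(1/\kappa)}{\lambda} + O(\lambda\tau)$, using $\mathrm{KL}(\nu \,\|\, \mathrm{unif}) \le \ln(1/\kappa)$ for $\kappa$-smooth $\nu$ and $\|g_t\|_\infty \le 1$; the parameter choices $\lambda = \gamma/4$ and $\tau \ge 16\ln(1/\kappa)/\gamma^2$ of \cite{DBLP:conf/colt/BunCS20} make this regret strictly below $\gamma\tau$. Combining the two bounds, $\min_{\nu\ \kappa\text{-smooth}} \frac{1}{\tau}\sum_t \langle \nu, g_t \rangle > 2\gamma - \gamma = \gamma$. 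Because this objective is linear in $\nu$, the minimiser is uniform over the $\kappa n$ indices with the smallest $\frac{1}{\tau}\sigma_\tau(i) = \frac{1}{\tau}\sum_t g_t(i)$, so the \emph{average} of the $\kappa n$ smallest normalised margins $\nicefrac{y_i}{\tau}\sum_j h_j(\mb{x}_i)$ exceeds $\gamma$. If strictly more than $\kappa n$ examples had normalised margin $\le \gamma$, then those $\kappa n$ smallest values would all be $\le \gamma$ and could not average above $\gamma$ — a contradiction. Hence at most $\kappa n$ examples have normalised margin $\le \gamma$, i.e.\ $\Pr_{(x,y) \sim S}[\nicefrac{y}{\tau}\sum_j h_j(x) \le \gamma] \le \kappa$, which is the utility claim.

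To summarise, the delicate step is the privacy lemma — that changing a single coordinate of the pre‑projection measure perturbs the renormalised $\kappa$-dense Bregman projection by at most $\nicefrac{1}{\kappa n}$ in statistical distance. The rest is bookkeeping: advanced composition applied $\tau$ times for privacy, and the regret bound for entropic mirror descent together with the minimax view of boosting (both as in \cite{DBLP:conf/colt/BunCS20}) for utility.
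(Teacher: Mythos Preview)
Your proposal is correct and follows exactly the approach the paper indicates: it explicitly says privacy comes from advanced composition plus the DP weak-learner definition, utility from regret bounds for lazy projected mirror descent and the boosting-as-a-game reduction, and defers the full proof to \cite{DBLP:conf/colt/BunCS20}. Your sketch reconstructs that argument faithfully, including correctly isolating the TV-stability of the normalised Bregman projection (the bound $\mathtt{d}(\hat\mu_t,\hat\mu_t')\le 1/(\kappa n)$) as the one nontrivial lemma on the privacy side; the paper itself contains no additional proof beyond the citation.
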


Weak Learner failure probability $\beta$ is critical to admit because
whatever ``noise'' process a DP weak learner uses to ensure privacy
may ruin utility on some round. So, we must
union bound over this event in the training error guarantee.

\section{Concrete Private Boosting}
\label{sec:concrete-private-boosting}

Here we specify concrete weak learners and give privacy guarantees for
\texttt{LazyBB} combined with these weak learners.

\subsection{Baseline: 1-Rules}
\label{sec:baseline:-1-rules}

To establish a baseline for performance of both private and
non-private learning, we use the simplest possible hypothesis class:
1-Rules or ``Decision Stumps'' \cite{Decision-Stump-L92,Decision-Stump-Holte93}. In the Boolean feature and
classification setting, these are just constants or signed literals
(e.g. $-x_{17}$) over the data domain. 
\begin{align*}
  &\mathsf{1R}(\mathcal{S}) = \{x_{i}\}_{i \in [d]} \cup
  \{-x_{i}\}_{i \in [d]} \cup \{+1, -1\} \quad\text{and}\quad
 \\ &\mr{err}(\mc{S}, \mu, h) = \sum\limits_{(\mb{x}_i,y_i)\in \mc{S}}
  \mu(i)\chi\{ h(\mb{x}_i) \neq y\}.
\end{align*}
To learn a 1-Rule given a distribution over the training set, return
the signed feature or constant with minimum weighted error. Naturally,
we use the Weighted Exponential Mechanism with noise rate $\eta$ to privatize
selection. This is simply the Generic Private Agnostic Learner of  \cite{DBLP:journals/siamcomp/KasiviswanathanLNRS11}, finessing 
the issue that ``weighted error'' is actually a \emph{set} of
utility functions (analysis in Appendix C). We denote the baseline and differentially private
versions of this algorithm as \texttt{1R} and \texttt{DP-1R},
respectively.

\begin{thm}
  \texttt{DP-1R} is a $(4 \eta \zeta, 0, \zeta)$-DP weak learner.
\end{thm}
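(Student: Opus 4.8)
The plan is to recognize \texttt{DP-1R} as a single invocation of the Weighted Exponential Mechanism and then to bound the robust sensitivity of its quality score. Concretely, \texttt{DP-1R} runs the Weighted Exponential Mechanism with noise rate $\eta$ over the candidate class $\mathsf{1R}$ using the quality score $q_{\mathcal{S},\mu}(h) = -\mathrm{err}(\mathcal{S},\mu,h) = -\sum_{i}\mu(i)\,\mathbf{1}[h(\mathbf{x}_i)\neq y_i]$. The one wrinkle --- flagged in the text --- is that ``weighted error'' is really a family of scalar utility functions $\{f_h\}_{h\in\mathsf{1R}}$ with $f_h(\mathcal{S},\mu)=q_{\mathcal{S},\mu}(h)$ rather than a single function; so I would first reduce to showing $\max_{h\in\mathsf{1R}}\Delta_\zeta(f_h)\le 2\zeta$ and then invoke Theorem~\ref{thm:weighted-EM} (or Theorem~\ref{thm:weighted-RNM}), which converts robust sensitivity $\Delta_\zeta$ into $(2\eta\Delta_\zeta,0,\zeta)$-privacy; with $\Delta_\zeta=2\zeta$ this is exactly $(4\eta\zeta,0,\zeta)$. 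I would also note that $\mathsf{1R}$ depends only on the feature set and not on the actual records, hence is identical on neighboring samples, so we really are comparing the same mechanism on $(\mathcal{S},\mu)$ and $(\mathcal{S}',\mu')$.

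The heart of the argument is the robust sensitivity bound. Fix $h\in\mathsf{1R}$, neighboring samples $\mathcal{S}\sim\mathcal{S}'$ differing only in record $j$, and distributions $\mu,\mu'$ on $[n]$ with $\mathtt{d}(\mu,\mu')<\zeta$. By the triangle inequality through $\mathrm{err}(\mathcal{S},\mu',h)$,
\[ |\mathrm{err}(\mathcal{S},\mu,h)-\mathrm{err}(\mathcal{S}',\mu',h)| \le |\mathrm{err}(\mathcal{S},\mu,h)-\mathrm{err}(\mathcal{S},\mu',h)| + |\mathrm{err}(\mathcal{S},\mu',h)-\mathrm{err}(\mathcal{S}',\mu',h)|. \]
For the first (reweighting) term, set $A:=\{i\in[n]: h(\mathbf{x}_i)\neq y_i\}$; it equals $|\mu(A)-\mu'(A)|\le \max_{B\subseteq[n]}|\mu(B)-\mu'(B)| = \mathtt{d}(\mu,\mu') < \zeta$, where it is essential to use the \emph{set} form of total variation distance rather than $\tfrac{1}{2}\sum_i|\mu(i)-\mu'(i)|$, which would only give $2\zeta$. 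For the second (record-swap) term, $\mathcal{S}$ and $\mathcal{S}'$ agree at every index $i\neq j$, so those summands cancel and we are left with $\mu'(j)\cdot|\mathbf{1}[h(\mathbf{x}_j)\neq y_j]-\mathbf{1}[h(\mathbf{x}'_j)\neq y'_j]|\le \mu'(j)$. Because \texttt{LazyBB} invokes the weak learner only on $\kappa$-smooth distributions with the matching promise $\zeta=1/(\kappa n)$, we have $\mu'(j)\le 1/(\kappa n)=\zeta$. Summing the two bounds gives $\Delta_\zeta(f_h)\le 2\zeta$, uniformly over $h$ and over the location $j$ of the differing record.

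Feeding $\Delta_\zeta=2\zeta$ into Theorem~\ref{thm:weighted-EM} then yields that \texttt{DP-1R} is a $(2\eta\cdot 2\zeta,0,\zeta)=(4\eta\zeta,0,\zeta)$-differentially private weak learner, and the accuracy clause of that theorem (agnostic selection of a $1$-Rule up to an additive $\ln(|\mathsf{1R}|/\beta)/\eta$ slack in weighted error) comes along for free. I expect the only genuine subtleties to be the two ``finessing'' points: (i) being careful that it is the set-based total variation distance that controls the reweighting term, so its contribution is $\zeta$ and not $2\zeta$; and (ii) remembering that the record-swap term is bounded only because the input distributions are $\kappa$-smooth with mass cap $1/(\kappa n)=\zeta$ --- without smoothness this term, and hence the statement, would be false. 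The rest (treating weighted error as a family of utilities, observing that $\mathsf{1R}$ is neighbor-invariant) is routine bookkeeping.
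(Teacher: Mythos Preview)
Your high-level plan matches the paper's: bound the robust sensitivity of $q_{\mathcal{S},\mu}(h)=-\mathrm{err}(\mathcal{S},\mu,h)$ by $2\zeta$ and then invoke Theorem~\ref{thm:weighted-EM}. Where you diverge is in \emph{how} you get $\Delta_\zeta\le 2\zeta$. The paper does not decompose via the triangle inequality; its Observation~\ref{err-sensitivity} claims directly that for neighboring $\mathcal{S}\sim\mathcal{S}'$ and \emph{arbitrary} $\mu,\mu'$,
\[
  \bigl|\mathrm{err}(\mathcal{S},\mu,h)-\mathrm{err}(\mathcal{S}',\mu',h)\bigr|\le \sum_{i=1}^n|\mu(i)-\mu'(i)|=2\,\mathtt{d}(\mu,\mu'),
\]
with no appeal to smoothness. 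You instead split into a reweighting term ($\le \mathtt{d}(\mu,\mu')<\zeta$, via the set form of TV) and a record-swap term ($\le \mu'(j)$), and then use $\kappa$-smoothness to bound $\mu'(j)\le 1/(\kappa n)=\zeta$.

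Your suspicion in point (ii) is well-founded, and your route is in fact the more careful one. The paper's Observation~\ref{err-sensitivity} tacitly treats the indicator $\chi\{h(\mathbf{x}_i)\neq y_i\}$ at the differing index $j$ as the same for $\mathcal{S}$ and $\mathcal{S}'$ when it rewrites the symmetric-difference contribution as $[\mu(j)-\mu'(j)]\chi$. But $\chi\{h(\mathbf{x}_j)\neq y_j\}$ and $\chi\{h(\mathbf{x}'_j)\neq y'_j\}$ can differ: taking $\mu=\mu'$ and these two indicators unequal gives $|\mathrm{err}(\mathcal{S},\mu,h)-\mathrm{err}(\mathcal{S}',\mu,h)|=\mu(j)>0=2\,\mathtt{d}(\mu,\mu')$. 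So the bound $2\,\mathtt{d}(\mu,\mu')$ does not hold without an additional control on $\mu(j)$, which is exactly the smoothness you invoke. The trade-off is that your argument proves the statement only for $\kappa$-smooth inputs with $\zeta=1/(\kappa n)$, rather than for all $\mu,\mu'$ with $\mathtt{d}(\mu,\mu')<\zeta$ as Definition~\ref{def:ed-private-wkl} literally demands; but since \texttt{LazyBB} only ever calls \texttt{DP-1R} on such smooth distributions, this is precisely what the downstream privacy claim needs.
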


Given a total privacy budget of $\epsilon$, we divide it uniformly
across rounds of boosting. Then, by Theorem \ref{thm:WBregBoost}, we
solve $\epsilon = 4\tau\cdot \eta \cdot\zeta$ for $\eta$ to determine how much noise
\texttt{DP-1R} must inject at each round. Note that privacy
depends on the statistical distance $\zeta$ between distributions over
neighboring datasets. \texttt{LazyBB} furnishes the promise
that $\zeta \leq 1/\kappa n$. It is natural for $\zeta$ to depend on the
number of samples: the larger the dataset, the easier it is to
``hide'' dependence on a single individual, and the less noise we can
inject at each round. Overall:

\begin{thm}
  \texttt{LazyBB} runs for $\tau$ rounds using \texttt{DP-1R} at
  noise rate $\eta = \frac{\epsilon \kappa n}{4\tau}$ is $\epsilon$-DP.
\end{thm}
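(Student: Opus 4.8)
The plan is to assemble three ingredients that are already in place and do a short calculation: (i) the per-round privacy cost of \texttt{DP-1R}, (ii) the statistical-distance promise that \texttt{LazyBB} (Algorithm~\ref{alg:LB-NxM}) hands to its base learner, and (iii) basic sequential composition across the $\tau$ rounds. Concretely, I would first pin the base learner's tolerance to $\zeta = 1/(\kappa n)$, then read off the per-round privacy loss of \texttt{DP-1R} at noise rate $\eta$, then compose over $\tau$ rounds, and finally substitute $\eta = \epsilon\kappa n/(4\tau)$ and watch the total collapse to $\epsilon$.

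The substantive step is (ii). Since \texttt{LazyBB} always normalizes a $\kappa$-dense measure before invoking $\mathtt{WkL}$, the distribution $\hat\mu_t$ presented at round $t$ is $\kappa$-smooth, so $\hat\mu_t(i)\le 1/(\kappa n)$ for every $i$. On neighboring samples $S \sim S'$, once we condition on the previously released hypotheses $h_1,\dots,h_{t-1}$ being the same, the margins $\sigma_{t-1}(i)$ — hence the pre-projection measures $\tilde\mu_t$ and $\tilde\mu'_t$ — agree coordinatewise except at the single record in which $S$ and $S'$ differ. Using the explicit form of the KL-projection onto the $\kappa$-dense measures (uniform rescale-and-clip, \cite{BarakHK09}) and the ensuing renormalization, one gets $\mathtt{d}(\hat\mu_t,\hat\mu'_t)\le 1/(\kappa n)$ at every round simultaneously. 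This is exactly the promise packaged into the ``$(1/\kappa n)$-DP weak learner'' hypothesis of Theorem~\ref{thm:WBregBoost}, so for a clean write-up I would simply cite that theorem (proved in \cite{DBLP:conf/colt/BunCS20}) rather than re-derive the bound.

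Now the bookkeeping. By the preceding theorem, \texttt{DP-1R} at noise rate $\eta$ is a $(4\eta\zeta,\,0,\,\zeta)$-DP weak learner; taking $\zeta = 1/(\kappa n)$ and using step (ii) to discharge the $\zeta$-closeness hypothesis, the map $S \mapsto h_t$ (with $h_1,\dots,h_{t-1}$ held fixed) is $\epsilon_b$-differentially private with $\epsilon_b = 4\eta/(\kappa n)$ and $\delta_b = 0$. The entire run of \texttt{LazyBB} is the sequential composition of these $\tau$ per-round mechanisms followed by the post-processing $\hat f = \operatorname{Maj}_{h_j\in H}[h_j(\cdot)]$; applying the basic branch of Theorem~\ref{thm:Sequential-Composition} with $\delta_0 = 0$ yields $(\tau\epsilon_b,\,0)$-DP, i.e. $\bigl(4\tau\eta/(\kappa n),\,0\bigr)$-DP. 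Substituting $\eta = \epsilon\kappa n/(4\tau)$ gives $4\tau\cdot\tfrac{\epsilon\kappa n}{4\tau}/(\kappa n) = \epsilon$, so \texttt{LazyBB} is $\epsilon$-DP, as claimed.

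The one place that warrants care — and the only plausible hidden gap — is step (ii): one must be certain the $1/(\kappa n)$ bound on $\mathtt{d}(\hat\mu_t,\hat\mu'_t)$ survives \emph{both} the Bregman projection and the renormalization, and holds at all $\tau$ rounds, not merely for a single round in isolation. Since that is precisely the content underlying Theorem~\ref{thm:WBregBoost}, invoking it is the safest route; everything else is the routine composition computation above. (The base learner's failure probability $\beta$ is irrelevant here: the DP guarantee of $\mathtt{WkL}$ is unconditional, and $\beta$ enters only the utility half of Theorem~\ref{thm:WBregBoost}.)
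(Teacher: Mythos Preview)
Your proposal is correct and follows essentially the same route as the paper: invoke the $(4\eta\zeta,0,\zeta)$-DP weak learner guarantee for \texttt{DP-1R}, use the $\zeta\le 1/(\kappa n)$ promise furnished by \texttt{LazyBB} (via Theorem~\ref{thm:WBregBoost} / \cite{DBLP:conf/colt/BunCS20}), and then basic-compose over $\tau$ rounds to get $4\tau\eta/(\kappa n)=\epsilon$. Your choice to apply the basic branch of Theorem~\ref{thm:Sequential-Composition} directly (rather than the advanced-composition statement in Theorem~\ref{thm:WBregBoost}) is the right call for recovering pure $\epsilon$-DP and matches the arithmetic the paper actually performs in the paragraph preceding the theorem.
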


If a weak learning assumption holds --- which for 1-Rules simplifies
to ``over every smooth distribution, at least one literal or constant
has $\gamma$-advantage over random guessing'' --- then we will boost
to a ``good'' margin. We can compute the advantage of \texttt{DP-1R}
given this assumption.

\begin{thm}
\label{thm:1R-advantage}
    Under a weak learning assumption with advantage  $\gamma$, \texttt{DP-1R}, with probability at least $1-\beta$, is a weak learner with advantage at least $\gamma -\frac{1}{\eta}\ln{\frac{|\mc{H}|}{\beta}}$. That is, for any distribution $\mu$ over $\mc{S}$, we have  
    \[
        \sum\limits_{(\mb{x}_i,y_i)\in \mc{S}} \mu(i) \chi\{ h_{out}(\mb{x}_i) \neq y_i\}  \leq \nicefrac{1}{2} -\big(\gamma - \nicefrac{1}{\eta}\ln{\nicefrac{|\mc{H}|}{\beta}}\big)
    \]
    where $h_{out}$ is the output hypothesis of \texttt{DP-1R}.
\end{thm}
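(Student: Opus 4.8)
The plan is to obtain this essentially as an immediate corollary of the utility half of Theorem~\ref{thm:weighted-EM} together with the weak learning assumption; no genuinely new argument is needed, only careful bookkeeping of signs and constants. First I would pin down the quality score that \texttt{DP-1R} actually optimizes: on input $(\mc{S},\mu)$ it invokes the Weighted Exponential Mechanism over $\mc{H} = \mathsf{1R}(\mc{S})$ with quality $q_{\mc{S},\mu}(h) = -\mr{err}(\mc{S},\mu,h) = -\sum_{(\mb{x}_i,y_i)\in\mc{S}}\mu(i)\,\chi\{h(\mb{x}_i)\neq y_i\}$, so that ``higher quality'' means ``smaller weighted misclassification''; in particular $h_{out}$ is sampled with probability proportional to $\exp(\eta\cdot q_{\mc{S},\mu}(h))$, matching Definition~\ref{def:WEM}.

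Next I would instantiate the ``Moreover'' clause of Theorem~\ref{thm:weighted-EM} with failure probability $\beta$: with probability at least $1-\beta$ the output satisfies $q_{\mc{S},\mu}(h_{out}) \ge \max_{h'\in\mc{H}} q_{\mc{S},\mu}(h') - \frac{1}{\eta}\ln\frac{|\mc{H}|}{\beta}$, which after negating and using $\max_{h'}(-\mr{err}) = -\min_{h'}\mr{err}$ is exactly
\[
  \mr{err}(\mc{S},\mu,h_{out}) \;\le\; \min_{h'\in\mc{H}}\mr{err}(\mc{S},\mu,h') + \tfrac{1}{\eta}\ln\tfrac{|\mc{H}|}{\beta}.
\]
It is worth emphasizing that this bound does \emph{not} involve the robust sensitivity $\Delta_\zeta$ at all, so the ``weighted error is really a set of utility functions'' subtlety handled in Appendix~C — the thing that complicates the \emph{privacy} accounting for \texttt{DP-1R} — plays no role in the utility direction, which is why this step is clean.

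Finally I would apply the weak learning assumption: for the distribution $\mu$ at hand, some signed literal or constant $h^\star\in\mc{H}$ has $\gamma$-advantage, i.e. $\Pr_{x\sim\mu}[h^\star(x)=c(x)]\ge\nicefrac{1}{2}+\gamma$, equivalently $\mr{err}(\mc{S},\mu,h^\star)\le\nicefrac{1}{2}-\gamma$, hence $\min_{h'\in\mc{H}}\mr{err}(\mc{S},\mu,h')\le\nicefrac{1}{2}-\gamma$. Chaining with the previous display gives, with probability at least $1-\beta$,
\[
  \mr{err}(\mc{S},\mu,h_{out}) \;\le\; \nicefrac{1}{2} - \gamma + \tfrac{1}{\eta}\ln\tfrac{|\mc{H}|}{\beta} \;=\; \nicefrac{1}{2} - \Big(\gamma - \tfrac{1}{\eta}\ln\tfrac{|\mc{H}|}{\beta}\Big),
\]
which is the displayed inequality in the statement; rereading $\mr{err}\le\nicefrac{1}{2}-\gamma'$ as ``advantage at least $\gamma'$'' finishes the proof. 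The only point that needs any care — and the closest thing to an obstacle — is matching quantifiers: the weak learning assumption is posed over $\kappa$-smooth distributions, whereas the statement quantifies over ``any distribution $\mu$ over $\mc{S}$.'' Since \texttt{LazyBB} only ever feeds \texttt{DP-1R} $\kappa$-smooth distributions this gap is immaterial in context, but I would either restrict the conclusion to smooth $\mu$ or observe that $\mc{H}$ already contains the constants $\pm1$, so under the stated assumption a $\gamma$-advantage hypothesis in fact exists for \emph{every} $\mu$.
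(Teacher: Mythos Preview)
Your proposal is correct and mirrors the paper's proof essentially line for line: the paper also identifies $q_{\mc{S},\mu}(h)=-\mr{err}(\mc{S},\mu,h)$, invokes the utility clause of Theorem~\ref{thm:weighted-EM} to obtain $\mr{err}(h_{out})\le\mr{err}(h_{opt})+\tfrac{1}{\eta}\ln\tfrac{|\mc{H}|}{\beta}$ (stated separately as Theorem~\ref{thm:optimality}), and then plugs in the weak learning assumption $\mr{err}(h_{opt})<\nicefrac{1}{2}-\gamma$. The paper does not address the smooth-vs.-arbitrary $\mu$ quantifier point you raise, so your remark there is a small addition rather than a divergence.
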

\subsection{TopDown Decision Trees}
\label{sec:decision-trees}

TopDown heuristics are a family of decision tree learning algorithms
that are employed by widely used software packages such as C4.5, CART,
and scikit-learn. We present a differentially private TopDown
algorithm that is a modification of decision tree learning algorithms
given by Kearns and Mansour \cite{KearnsM96}. At a high level, TopDown induces decision
trees by repeatedly \emph{splitting} a leaf node in the tree built so
far. On each iteration, the algorithm \emph{greedily} finds the leaf
and splitting function that maximally reduces an upper bound on the
error of the tree. The selected leaf is replaced by an internal node
labeled with the chosen splitting function, which partitions the data
at the node into two new children leaves. Once the tree is built, the
leaves of the tree are labeled by the label of the most common class
that reaches the leaf. Algorithm~\ref{alg:DP-TopDown-DT}, \texttt{DP-TopDown}, is a ``reference implementation" of the
differentially private version of this
algorithm. \texttt{DP-TopDown}, instead of choosing the
best leaf and splitting function, applies the Exponential Mechanism to
noisily select a leaf and splitting function in the built tree so
far. The Exponential Mechanism is applied on the set of all possible
leaves and splitting functions in the current tree; this is
computationally feasible in our Boolean-feature setting. Next we
introduce necessary notation and discuss the privacy guarantee of our
algorithm, and how it is used as a weak learner for our boosting
algorithm.

\paragraph{\textbf{DP TopDown Decision Tree.}} Let $F$ denote a class of
Boolean splitting functions with input domain $\mc{S}$. Each internal
node is labeled by a splitting function $h:\mc{S}\to \{0,1\}$. These
splitting functions route each example $x \in \mc{S}$ to exactly one
leaf of the tree. That is, at each internal node if the splitting
function $h(x)=0$ then $x$ is routed to the left subtree, and $x$ is
routed to the right subree otherwise. Furthermore, let $G$ denote the
\emph{splitting criterion}. $G:[0,1]\to [0,1]$ is a concave function
which is symmetric about $1/2$ and $G(1/2)=1$. Typical examples of
splitting criterion function are Gini and
Entropy. Algorithm~\ref{alg:DP-TopDown-DT} builds decision trees in
which the internal nodes are labeled by functions in $F$, and the
splitting criterion $G$ is used to determine which leaf should be
split next, and which function $h\in F$ should be used for the split.

Let $T$ be a decision tree whose leaves are labeled by $\{0,1\}$ and
$\mu$ be a distribution on $\mc{S}$. The weight of a leaf
$\ell\in leaves(T)$ is defined to be the weighted fraction of data
that reaches $\ell$ i.e., $w(\ell,\mu)=\Pr_{\mu}[x\text{ reaches }
\ell]$. The weighted fraction of data with label $1$ at leaf $\ell$ is
denoted by $q(\ell,\mu)$. Given these we define error of $T$ as follows.
\begin{align*}
    \mathrm{err}(T,\mu) = \sum\limits_{\ell\in leaves(T)} w(\ell,\mu) \min\{q(\ell,\mu), 1-q(\ell,\mu)\}
\end{align*}
Noting that $G(q(\ell,\mu))\geq \min\{q(\ell,\mu), 1-q(\ell,\mu)\}$, we have an upper bound for $\mathrm{err}(T,\mu)$.
 \begin{align*}
    \mathrm{err}(T,\mu) \leq \mc{G}(T,\mu) = \sum\limits_{\ell\in leaves(T)} w(\ell,\mu) G(q(\ell,\mu)).
\end{align*}
For $\ell\in leaves(T)$ and $h\in F$ let $T(\ell,h)$ denote the tree obtained from $T$ by replacing $\ell$ by an internal node that splits subset of data that reaches $\ell$, say $\mc{S}_{\ell}$, into two children leaves $\ell_0$, $\ell_1$. Note that any data $x$ satisfying $h(x) = i$ goes to $\ell_i$. The quality of a pair $(\ell,h)$ is the improvment we achieve by splitting at $\ell$ according to $h$. Formally,
    \[
        \mathrm{im}_{\ell, h, \mu} = \mc{G}(T,\mu)- \mc{G}(T(\ell,h),\mu)
    \]
At each iteration, Algorithm~\ref{alg:DP-TopDown-DT} chooses a pair $(\ell^*,h^*)$ according to the Exponential Mechanism with probability proportional to $\mathrm{im}_{\ell, h,\mu}$. By Theorem~\ref{thm:weighted-EM}, the quality of the chosen pair $(\ell^*,h^*)$ is close to the optimal split with high probability.

\begin{algorithm}[tb]
  \caption{Differentially Private TopDown-DT}\label{alg:DP-TopDown-DT}
  \begin{algorithmic}[1]
    \REQUIRE{Data sample $\mc{S}$, distribution $\hat{\mu}$ over $\mc{S}$, number of internal nodes $t$, and $\eta>0$.}
    \STATE $T\gets$ the single-leaf tree.
    \STATE $\mathcal{C}\gets leaves(T)\times F$ 
    \WHILE{$T$ has fewer than $t$ internal node}
        \STATE $(\ell^*,h^*)\gets$ select a candidate from $\mathcal{C}$ w.p. $\propto\exp(\eta\cdot \mathrm{im}_{\ell,h,\hat{\mu}})$
        \STATE $T\gets T(\ell^*,h^*)$
        \FOR{each new pair $\ell\times h\in leaves(T)\times F$}
                \STATE $\mathrm{im}_{\ell,h,\hat{\mu}}\gets G(T,\hat{\mu})- G(T(\ell,h),\hat{\mu})$
                \STATE Add $\mathrm{im}_{\ell,h,\hat{\mu}}$ to $\mathcal{C}$
        \ENDFOR
    \ENDWHILE 
    \STATE Label leaves by majority label [WRNM with privacy budget $8t\cdot\eta\cdot\zeta$]
    \label{line:leaf-labeling}
    \STATE {\bfseries Output:} $T$
  \end{algorithmic}
\end{algorithm}

\begin{thm}[Privacy guarantee]
    \label{thm:TopDown-privacy}
     \texttt{DP-TopDown}, Algorithm~\ref{alg:DP-TopDown-DT}, is a $(16t\cdot\eta\cdot\zeta, 0, \zeta)$-DP weak learner.
\end{thm}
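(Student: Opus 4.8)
\emph{Proof idea.} The plan is to view \texttt{DP-TopDown} as the sequential composition of two blocks: the $t$ split-selection calls to the Weighted Exponential Mechanism inside the \textbf{while} loop, and the leaf-labeling step on line~\ref{line:leaf-labeling}. Each split selection is a Weighted Exponential Mechanism at noise rate $\eta$ on the quality score $\mathrm{im}_{\ell,h,\mu}$, so by Theorem~\ref{thm:weighted-EM} it is $(2\eta\Delta_\zeta,0,\zeta)$-DP, where $\Delta_\zeta$ is the robust sensitivity of $\mathrm{im}$; I will argue $\Delta_\zeta\le 4\zeta$, making each round $(8\eta\zeta,0,\zeta)$-DP and, by basic sequential composition (Theorem~\ref{thm:Sequential-Composition}), the whole loop $(8t\eta\zeta,0,\zeta)$-DP. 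The labeling step is run with privacy budget $8t\eta\zeta$: since the leaves of the current tree partition $\mc{S}$, it is a parallel composition (Theorem~\ref{thm:parallel-composition}) of one Weighted Return Noisy Max query per leaf over that leaf's weighted label counts, each calibrated through Theorem~\ref{thm:weighted-RNM} to the robust sensitivity of those counts (which is $O(\zeta)$ by the same reasoning as below), and $8t\eta\zeta$ is exactly what this calibration spends. Composing the loop with the labeling step adds the two $\epsilon$'s, giving $(16t\eta\zeta,0,\zeta)$-DP; note $\delta=0$ throughout, so this is basic rather than advanced composition, and the ``$\zeta$-closeness'' precondition simply carries through composition because it is a uniform side condition on the inputs.

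The technical core is the bound $\Delta_\zeta\le 4\zeta$ on the robust sensitivity of $\mathrm{im}_{\ell,h,\mu}$, which must hold uniformly over every intermediate tree $T$ so that the adaptively chosen candidate sets and quality scores of later rounds still compose. Because $T(\ell,h)$ differs from $T$ only by replacing the leaf $\ell$ with two children $\ell_0,\ell_1$, every untouched leaf cancels and $\mathrm{im}_{\ell,h,\mu}=w(\ell,\mu)G(q(\ell,\mu))-w(\ell_0,\mu)G(q(\ell_0,\mu))-w(\ell_1,\mu)G(q(\ell_1,\mu))$; thus $\mathrm{im}$ sees $\mu$ and the dataset only through three leaf-terms, each of the form $\phi(a,b)=b\,G(a/b)$ in the weighted label-$1$ mass $a$ and total mass $b$ of one of $\ell,\ell_0,\ell_1$. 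Two elementary facts about $\phi$ tame it despite $G$ possibly being very steep near $0$ and $1$: $0\le\phi(a,b)\le b$ (because $G\le 1$), and $\phi$ is monotone --- routing more data to a leaf can only increase its term, by concavity of $G$ together with $G(0)=G(1)=0$. I then bound the change in $\mathrm{im}$ along the two moves permitted by the robust-sensitivity definition. First, reweighting on a fixed dataset with $\mathtt{d}(\mu,\mu')<\zeta$: at most $\zeta$ units of mass are transported, so the change of each leaf-term is controlled by how much of that transported mass reaches the leaf, and since the data reaching $\ell_0$ and $\ell_1$ partition the data reaching $\ell$, these contributions telescope and the net change in $\mathrm{im}$ stays $O(\zeta)$. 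Second, swapping a single record between neighboring datasets: \texttt{LazyBB} invokes the weak learner only on $\kappa$-smooth distributions with $\kappa n = 1/\zeta$, so the swapped record carries weight at most $\zeta$, and inserting/deleting one mass-$\le\zeta$ chunk shifts each affected leaf-term by $O(\zeta)$, telescoping in the same way. Summing the two and tracking constants yields $\Delta_\zeta\le 4\zeta$.

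The hard part is exactly this sensitivity bound, because the common splitting criteria $G$ --- entropy in particular --- are \emph{not} globally Lipschitz ($G'$ diverges as $q\to 0,1$), so one cannot simply bound the gradient of $\phi$; the argument must instead lean on the boundedness $\phi(a,b)\le b$ and the monotonicity of $\phi$ above, via a bounded-differences accounting over the three leaves rather than a derivative bound. This is the portion I would carry out carefully in the appendix. The remaining pieces are routine: checking that the per-round bound is genuinely tree-independent (it only involves the local split, so it is), invoking sequential composition in its form for the $(\epsilon,\delta,\zeta)$-DP-weak-learner notion rather than plain DP (legitimate since $\zeta$-closeness is a uniform hypothesis), and verifying that the per-leaf weighted majority counts used for labeling are likewise $O(\zeta)$ robustly sensitive so that the noise rate hidden in the ``budget $8t\eta\zeta$'' annotation on line~\ref{line:leaf-labeling} is correctly calibrated. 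Adding the loop's $8t\eta\zeta$ to the labeling's $8t\eta\zeta$ closes out the claimed $(16t\eta\zeta,0,\zeta)$-DP guarantee.
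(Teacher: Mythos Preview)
Your high-level decomposition --- $t$ calls to the Weighted Exponential Mechanism composed sequentially for $8t\eta\zeta$, then leaf labeling under parallel composition for another $8t\eta\zeta$, totalling $16t\eta\zeta$ --- is exactly the paper's argument. The divergence is in how the robust sensitivity of $\mathrm{im}_{\ell,h,\mu}$ is bounded. The paper does \emph{not} attempt a criterion-agnostic argument: its appendix explicitly restricts to the Gini index $G(q)=4q(1-q)$ and obtains the $4\zeta$ bound by direct algebra on $w(\ell,\mu)G(q(\ell,\mu))$ for each of the three affected leaves $\ell,\ell_0,\ell_1$, then sums.

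Your more ambitious plan --- handling arbitrary concave $G$, entropy included, via monotonicity and boundedness of the perspective $\phi(a,b)=b\,G(a/b)$ --- has a genuine gap. Take $T$ a single leaf $\ell$, let $h$ route feature-$0$ points to $\ell_0$ and feature-$1$ points to $\ell_1$, and let $\mu$ place all mass on label-$0$, feature-$0$ points. Then $\phi(\ell)=\phi(\ell_0)=\phi(\ell_1)=0$ and $\mathrm{im}=0$. Now form $\mu'$ by shifting mass just under $\zeta$ to a label-$1$, feature-$1$ point: under $\mu'$ each child remains label-pure, so $\phi(\ell_0)=\phi(\ell_1)=0$ still, but $\phi(\ell)=G(\zeta)$ and hence $\mathrm{im}'=G(\zeta)$. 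For the entropy criterion this is $\Theta(\zeta\log(1/\zeta))$, so the robust sensitivity of $\mathrm{im}$ is \emph{not} $O(\zeta)$. The two properties you invoke give only signs and coarse caps, and no telescoping over the three leaves rescues this example since only $\phi(\ell)$ moves. The $4\zeta$ bound genuinely needs a Lipschitz-type control on $G$, which Gini has ($|G'|\le 4$) and entropy lacks; once you restrict to Gini as the paper does, the remainder of your outline coincides with the paper's proof.
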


As before, given a total privacy budget of $\epsilon$, we divide it uniformly
across rounds of boosting. Then, by Theorem \ref{thm:WBregBoost}, we
solve $\epsilon = 16\tau\cdot t\cdot\eta\cdot\zeta$ for $\eta$ to determine how much noise
\texttt{DP-TopDown} must inject at each round.  \texttt{LazyBB} furnishes the promise
that $\zeta \leq 1/\kappa n$. Overall:

\begin{thm}
  \texttt{LazyBB} runs for $\tau$ rounds using \texttt{DP-TopDown} at
  noise rate $\eta = \frac{\epsilon \kappa n}{16\tau t}$ is $\epsilon$-DP.
\end{thm}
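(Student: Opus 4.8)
The plan is to reuse, essentially verbatim, the reasoning behind the analogous \texttt{DP-1R} corollary: feed the privacy guarantee of \texttt{DP-TopDown} into sequential composition over the $\tau$ boosting rounds, using the smoothness promise that \texttt{LazyBB} supplies. First I would pin down the per-round weak-learner parameters. By Theorem~\ref{thm:TopDown-privacy}, run at noise rate $\eta$, \texttt{DP-TopDown} is a $(16t\eta\zeta,\,0,\,\zeta)$-DP weak learner for every promised statistical distance $\zeta$. The loop of \texttt{LazyBB} only ever invokes the base learner on $\kappa$-smooth distributions, and — as established in the privacy half of Theorem~\ref{thm:WBregBoost}, following \cite{DBLP:conf/colt/BunCS20} — on two neighboring datasets the smoothed distributions $\hat{\mu}_1,\dots,\hat{\mu}_\tau$ and $\hat{\mu}'_1,\dots,\hat{\mu}'_\tau$ fed to the base learner at corresponding rounds differ in total variation by at most $\zeta = 1/(\kappa n)$. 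Instantiating $\zeta = 1/(\kappa n)$ therefore makes each weak-learner call $(\epsilon_b,0,1/(\kappa n))$-DP in the sense of Definition~\ref{def:ed-private-wkl}, with $\epsilon_b = 16t\eta/(\kappa n)$ and $\delta_b = 0$.

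Next I would account for every data-dependent step across the whole run. The only randomized steps in \texttt{LazyBB} that touch $S$ are the $\tau$ calls $h_t \gets \mathtt{WkL}(S,\hat{\mu}_t)$; the margin bookkeeping, exponential reweighting, and Bregman projection are deterministic functions of $S$ and the previously produced hypotheses, so on a neighboring dataset they produce exactly the ``slightly perturbed distribution'' that Definition~\ref{def:ed-private-wkl} is designed to tolerate. Applying the basic branch of sequential composition (Theorem~\ref{thm:Sequential-Composition}) to the $\tau$ weak-learner calls then yields that \texttt{LazyBB} is $(\tau\epsilon_b,\,\tau\delta_b) = \bigl(16\tau t\eta/(\kappa n),\,0\bigr)$-DP. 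Finally, substituting the stated noise rate $\eta = \epsilon\kappa n/(16\tau t)$ collapses $16\tau t\eta/(\kappa n)$ to exactly $\epsilon$, giving $\epsilon$-DP; equivalently, solving $16\tau t\eta/(\kappa n) = \epsilon$ for $\eta$ recovers the claimed value.

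The step that requires genuine care rather than arithmetic is the assertion that the promised bound $\zeta = 1/(\kappa n)$ holds at \emph{every} round simultaneously on \emph{both} members of a neighboring pair — i.e., that the evolving hypothesis ensemble cannot drive the intermediate distributions apart in total variation — but this is precisely the inductive argument already carried out for Theorem~\ref{thm:WBregBoost}, which I would invoke directly. The only deviation from that theorem's proof is replacing advanced composition with basic composition, so as to land at pure $\epsilon$-DP with $\delta = 0$ rather than an $(\epsilon,\delta)$ tradeoff; since $\delta_b = 0$ here, basic composition loses nothing qualitatively and keeps the guarantee clean.
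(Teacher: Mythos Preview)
Your proposal is correct and follows essentially the same route as the paper: invoke Theorem~\ref{thm:TopDown-privacy} for the per-round guarantee, use the $\zeta \le 1/(\kappa n)$ promise from \texttt{LazyBB}, compose across $\tau$ rounds, and solve for $\eta$. You are in fact more explicit than the paper in one respect: Theorem~\ref{thm:WBregBoost} as stated only records the advanced-composition bound, whereas to land at pure $\epsilon$-DP one needs the basic branch of Theorem~\ref{thm:Sequential-Composition}, which you correctly single out.
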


\section{Experiments}
\label{sec:experiments}



Here we compare our smooth boosting algorithm (\texttt{LazyBB}) over
both decision trees and 1-Rules to: differentially private logistic
regression using objective perturbation (DP-LR) \cite{DBLP:journals/jmlr/ChaudhuriMS11}, Differentially Private
Bagging (DP-Bag) \cite{DP-Bagging-JordonYS19}, and Privacy-Preserving Gradient
Boosting Decision Trees (DPBoost) \cite{DBLP:conf/aaai/LiWWH20}.
In our implementation we used the IBM differential privacy library (available under MIT licence) \cite{diffprivlib} for standard DP mechanisms and accounting, and scikit-learn (available under BSD licence) for infrastructure \cite{scikit-learn}.
These
experiments show that smooth boosting of \emph{1-Rules} can yield
improved model accuracy and sparsity under identical privacy
constraints. 

We experiment with three freely available real-world
datasets. \textbf{Adult} (Available from UCI Machine Learning
  Repository) has 32,561 training examples, 16,282 test examples, and 162 features after dataset-oblivious one-hot coding --- which incurs
no privacy cost. The task is to predict if someone makes more than 50k
US dollars per year from Census data. Our reported accuracies are holdout tests on the canonical test set associated with Adult. \textbf{Cod-RNA} (available from the LIBSVM website) has 59,535 training
examples and 80 features after dataset-oblivious one-hot coding, and
asks for detection of non-coding RNA. \textbf{Mushroom} (available from the LIBSVM website) has 8124 training examples and 117 features after one-hot coding, which asks to identify poisonous mushrooms. For Mushroom and CodRNA, we report cross-validated estimates of accuracy. All experiments were run on a 3.8 GHz 8-Core Intel Core i7 with 16GB of RAM consumer desktop computer.

\paragraph{\textbf{Parameter Selection Without Assumptions.}}
\begin{table}
\scalebox{0.9}{
\begin{tabular}{ccccc}
    \toprule
    \multirow{2}{*}{WKL}&\multicolumn{3}{c}{\textbf{Parameter}}\\
    \cmidrule{2-4}
    & $\tau$ &  $\lambda$  & $\kappa$\\
    \midrule
     \texttt{OneRule} &  $5, 9, 15, 19, 25, 29, $ 
     & $0.2, 0.25, $ 
     & $0.2, 0.25,$\\ 
     & $39, 49, 65, 75, 99$ & $\cdots, 0.5$ & $ \cdots, 0.5$\\
     \texttt{TopDown} & $5, 9, 15, 19, 25, 29,$ 
     & $0.35, 0.4$ 
     & $0.25, 0.3$ \\ 
     & $35, 39, 45, 51$ & &\\
    \bottomrule
\end{tabular}}
\caption{Parameters grid}
\label{tab:param-grid}
\end{table}

\begin{figure}
    \centering
    \includegraphics[width=0.45\textwidth]{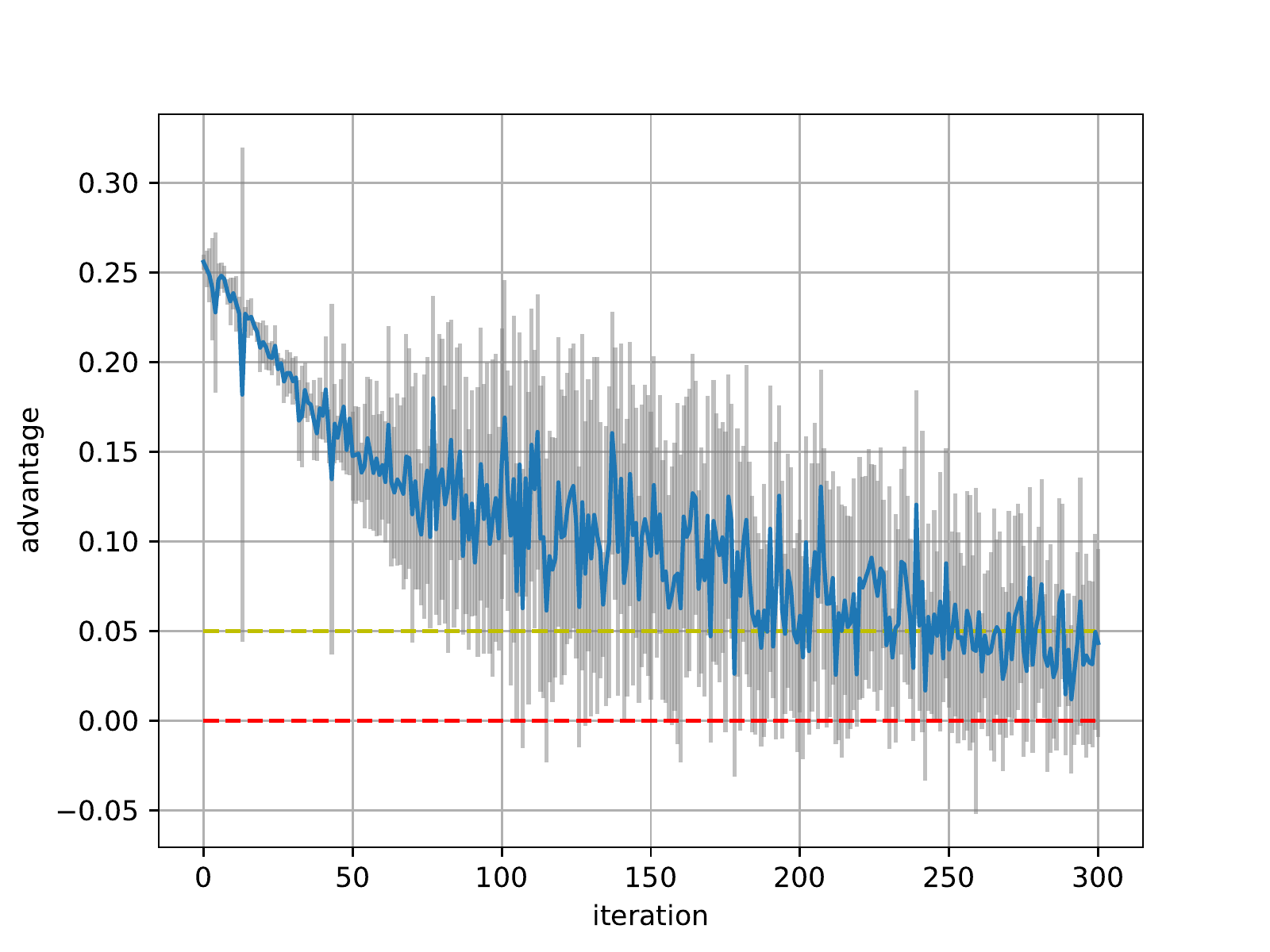}
    \caption{Advantage curve and margin histogram.}
   \label{fig:adv_curve_and_margins}
\end{figure}



We select parameters for \texttt{LazyBB} and \texttt{DP-LR} entirely using grid-search
and cross-validation (Table~\ref{tab:param-grid} for \texttt{LazyBB}) for each value of epsilon plotted i.e. $\epsilon \in (0.05, 0.1,\cdots, 0.5, 1, 3, 5)$. Over the small datasets we use for experiments, the Weak Learner
assumption does not hold for ``long enough'' to realize the training
error guarantee of Theorem~\ref{thm:WBregBoost}. For example, fixing
$\kappa = 1/2$ --- seeking ``good'' margin on only half the training
set --- suppose we have a (1/20)-advantage Weak Learner. That is, at
every round of boosting, each new hypothesis has accuracy at least
55\% over the intermediate distribution. Under these conditions,
Theorem~\ref{thm:WBregBoost} guarantees utility after approximately
4,000 rounds of boosting. Figure~\ref{fig:adv_curve_and_margins} plots
advantage on the Adult dataset at each round of boosting with
$\lambda = \gamma/4$ as required by Theorem~\ref{thm:WBregBoost},
averaged over 10 runs of the boosting decision stumps with total
privacy budget $\epsilon = 1$. The weak learner assumption fails after only  250 rounds of boosting.

And yet, even when run with much
\emph{faster} learning rate $\lambda$, we see good accuracy from
\texttt{LazyBB} --- the assumption holds for \emph{small} $\tau$, ensuring that \texttt{DP-1R} has advantage. So, Theorem~\ref{thm:WBregBoost} is much
more pessimistic than is warranted. This is a know limitation of the
analysis for any \emph{non}-adaptive boosting algorithm
\cite{10.5555/2207821}. In the non-private setting, we set $\lambda$
very slow and boost for ``many'' rounds, until decay in advantage
triggers a stopping criterion. In the private setting (where
non-adaptivity makes differential privacy easier to guarantee) running
for ``many'' rounds is not feasible; noise added for privacy would
saturate the model.  These
experiments motivate further theoretical investigation of boosting
dynamics for non-adaptive algorithms, due to their utility in the
privacy-preserving setting.



\begin{table*}[t]
\centering
\scalebox{0.95}{
\begin{tabular}{c|ccc}

    \rotatebox{90}{~~~{\footnotesize Low privacy regime}} &  \includegraphics[width=0.33\textwidth]{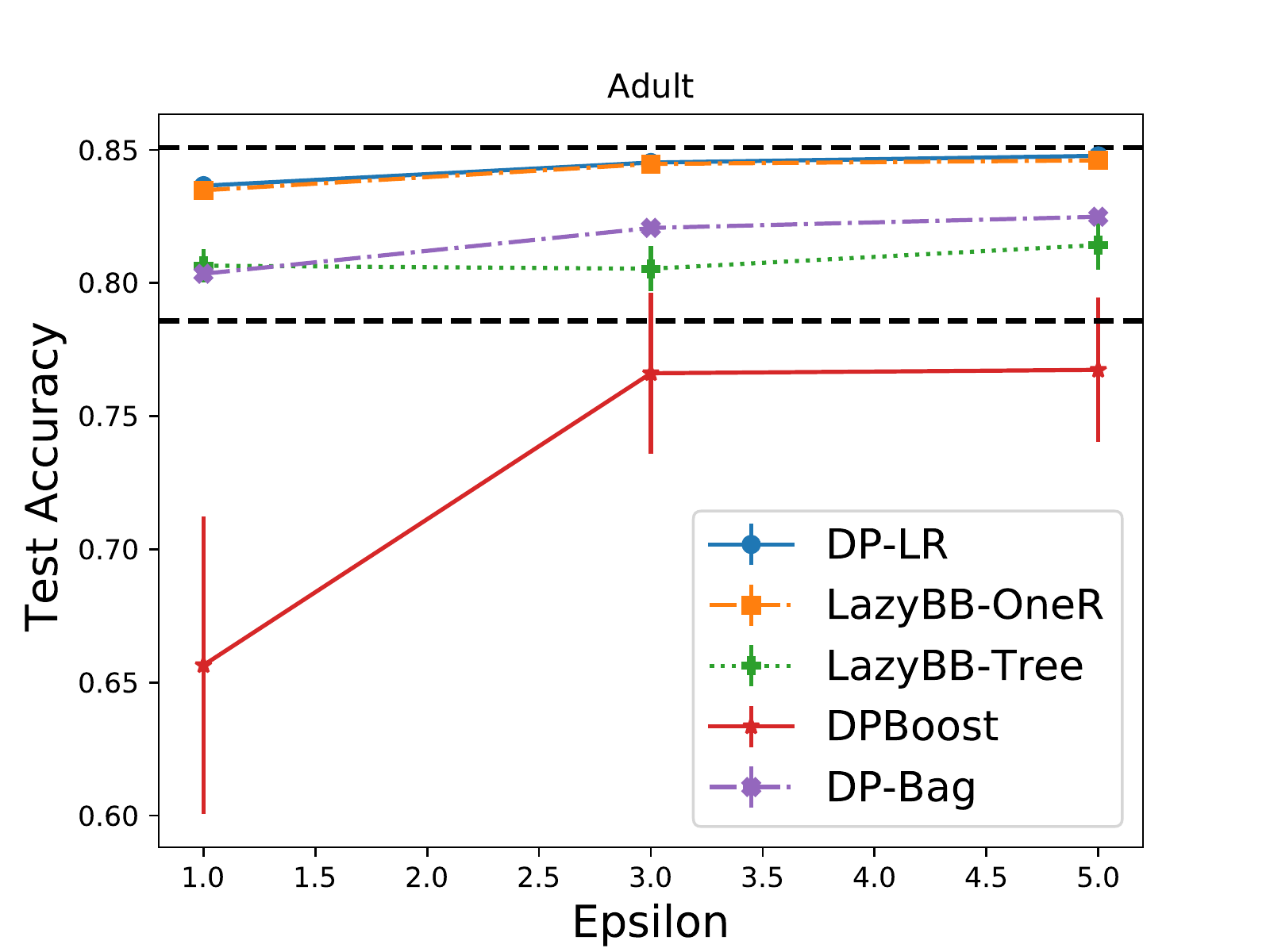} & \hspace{-2 em}
    \includegraphics[width=0.33\textwidth]{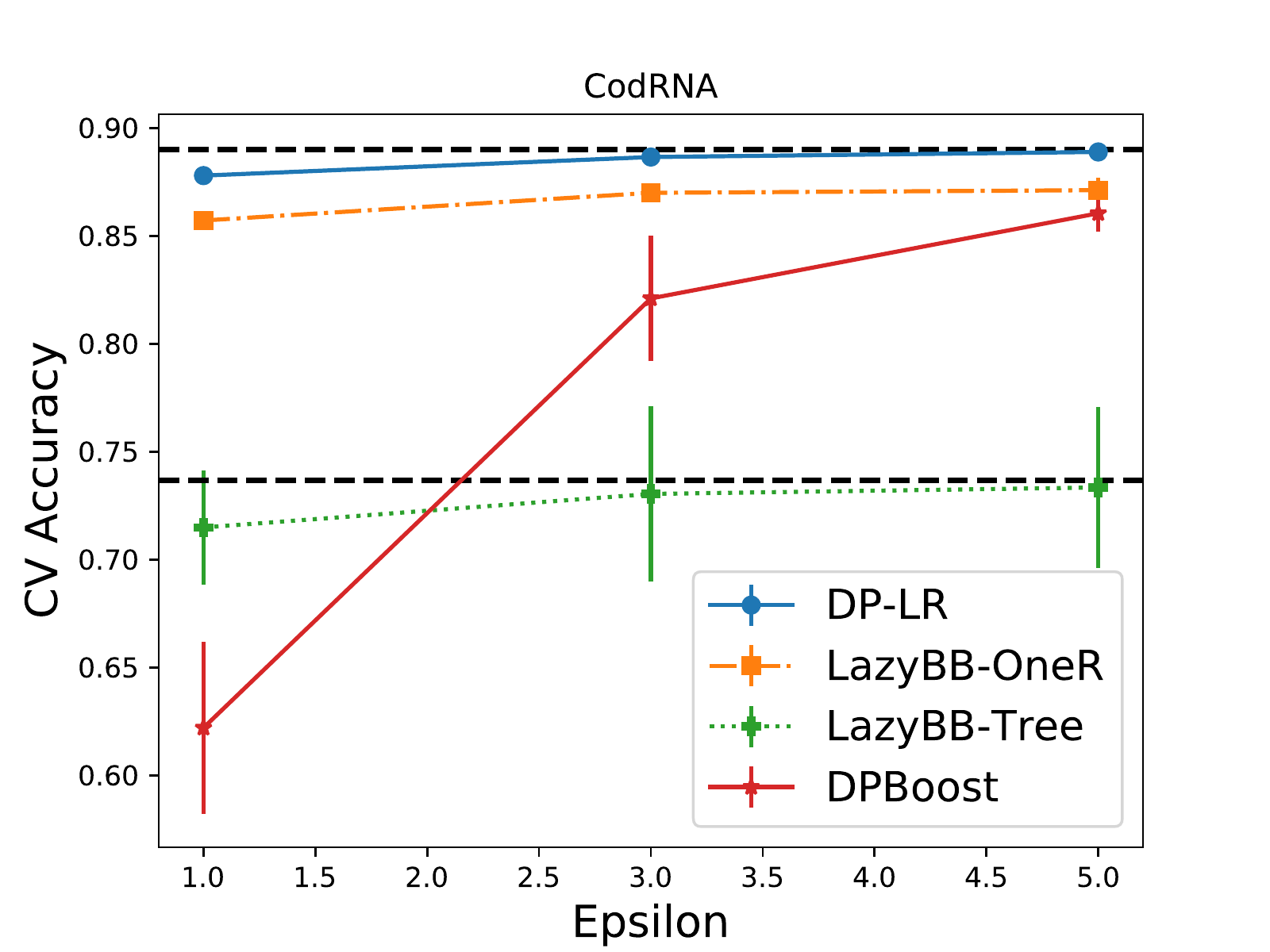} & \hspace{-2 em}
    \includegraphics[width=0.33\textwidth]{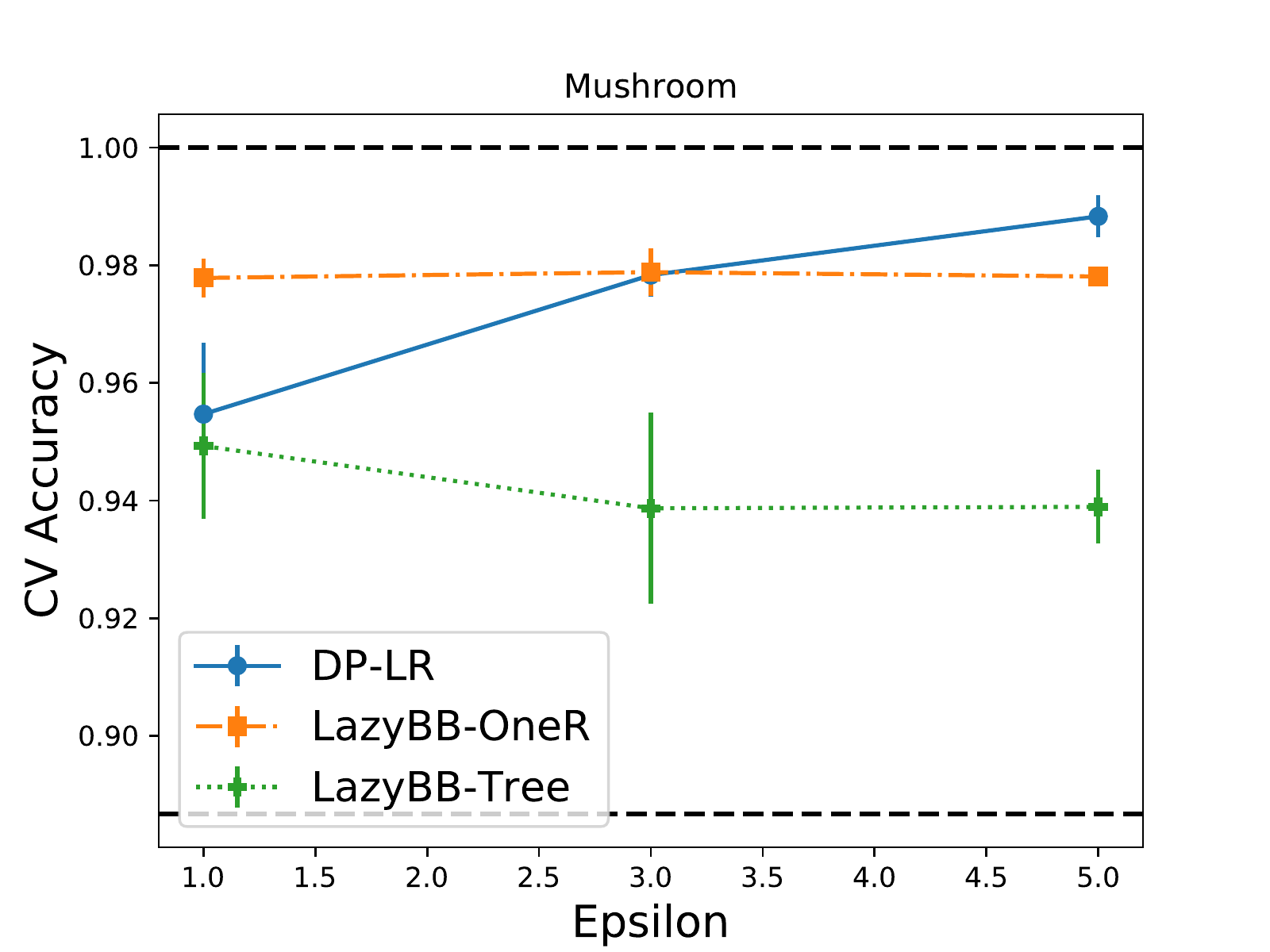} \\
    \midrule 
    \rotatebox{90}{~~~{\footnotesize High privacy regime}} & \includegraphics[width=0.33\textwidth]{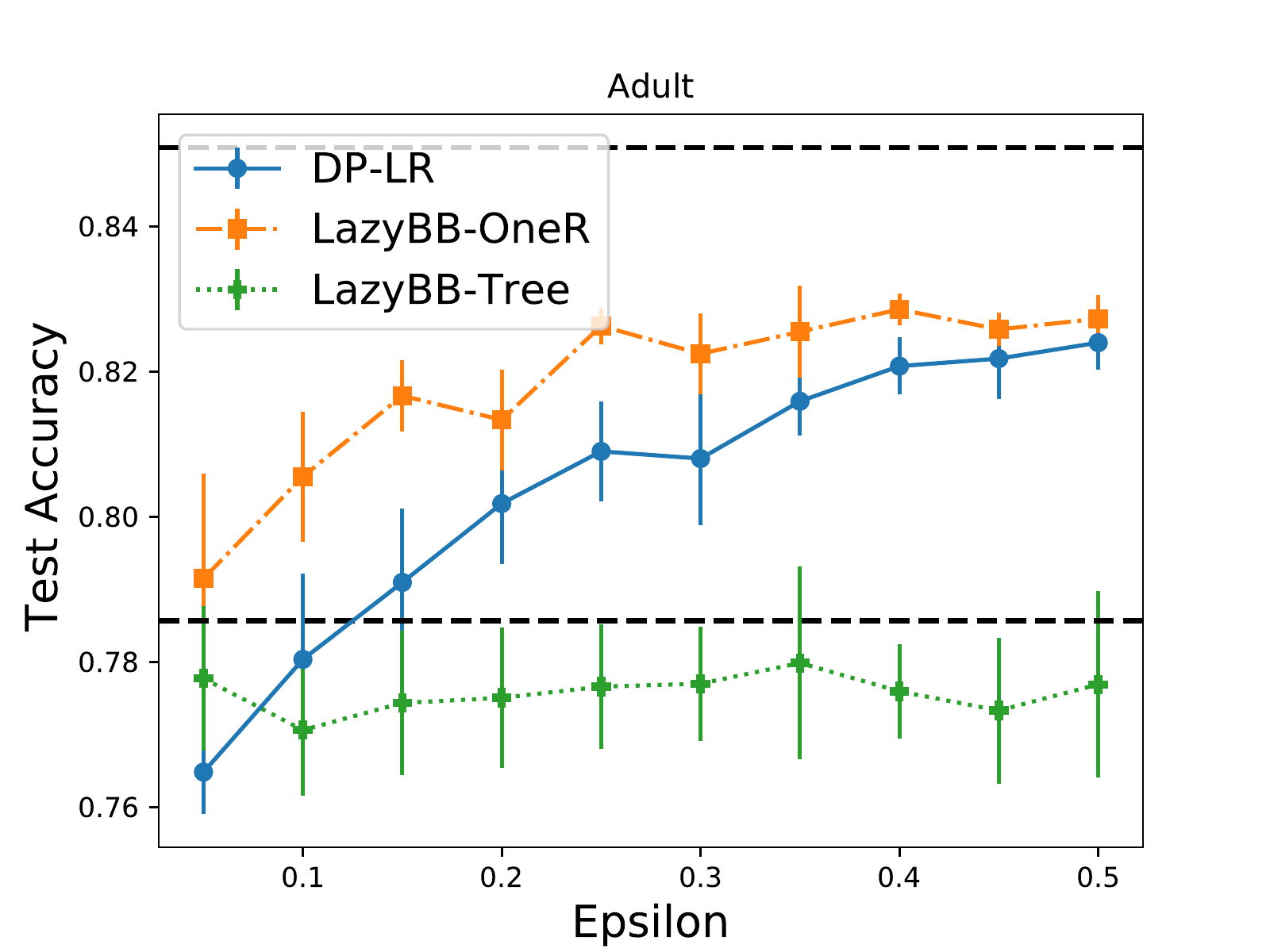} & \hspace{-2 em}
    \includegraphics[width=0.33\textwidth]{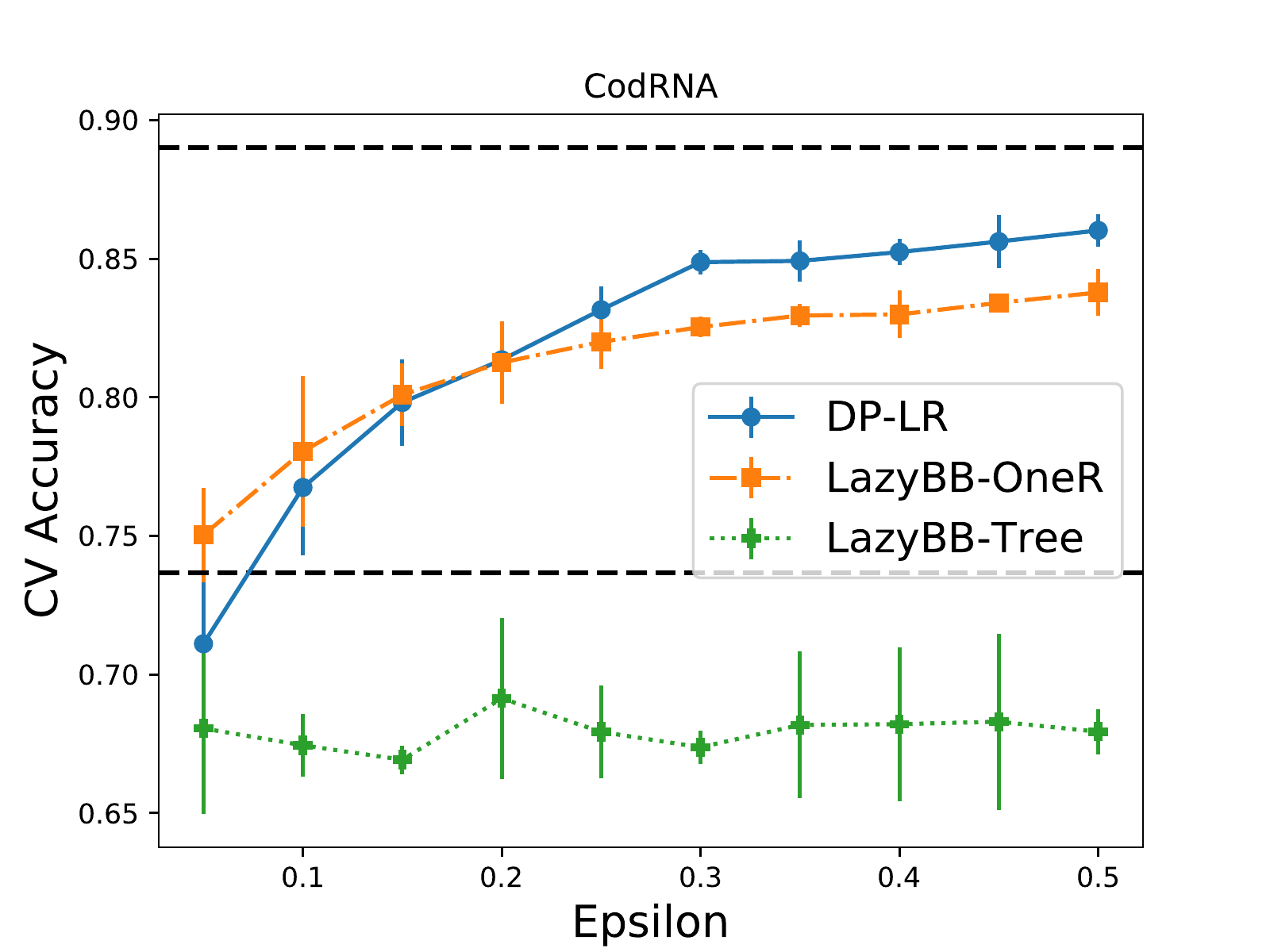} & \hspace{-2 em}
    \includegraphics[width=0.33\textwidth]{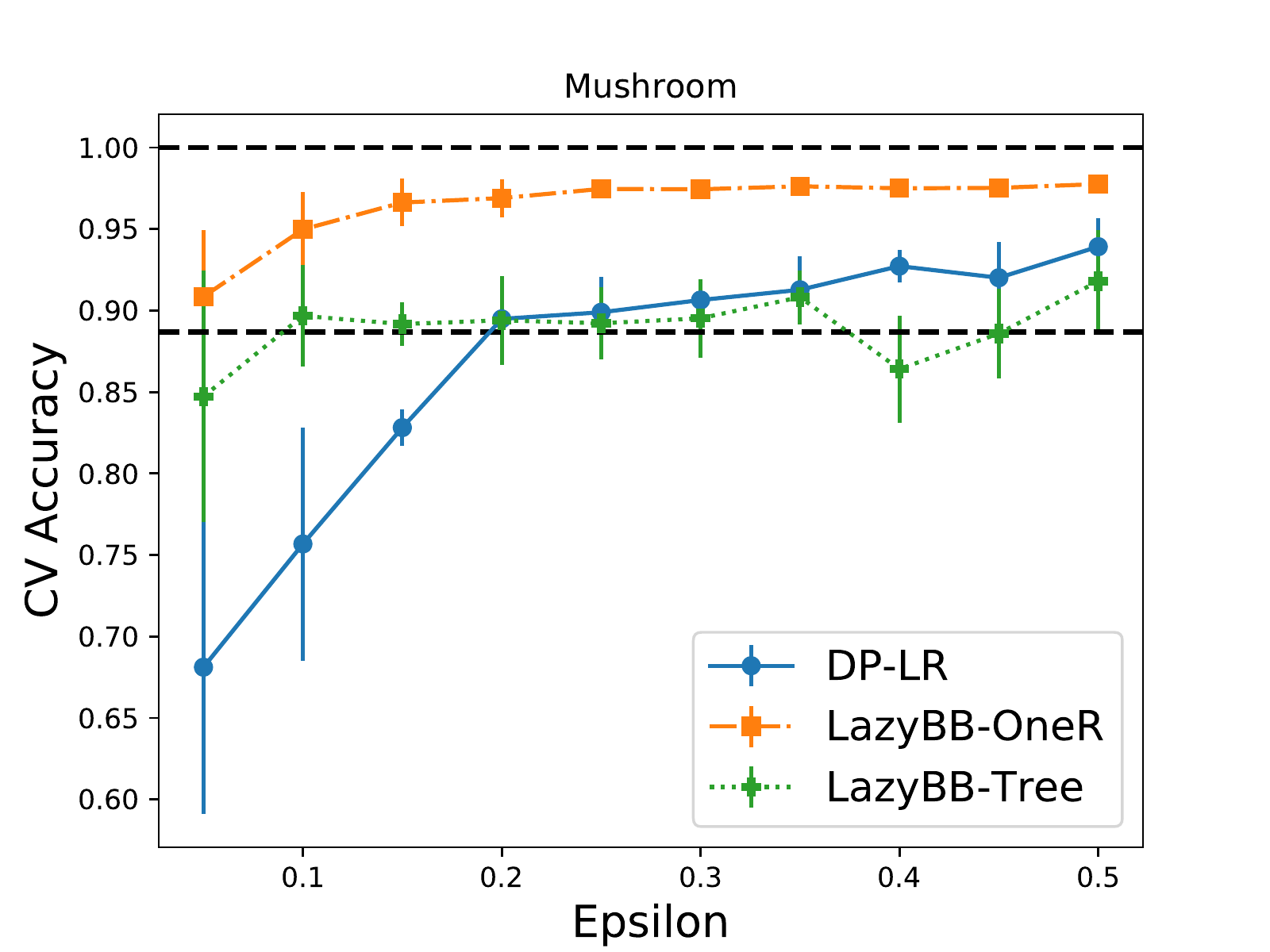}\\
\end{tabular}}
\caption{Learning Curves --- Privacy vs. accuracy.}
\label{table-test-1}
\end{table*}


\paragraph{\textbf{Results.}}
\label{sec:results}
In Table \ref{table-test-1} we plot the accuracy of each of the 5 methods
above against privacy constraint $\epsilon$, along with two
non-private baselines to both quantify the ``cost of privacy'' and
ensure that the private learners are non-trivial. The strong
non-private baseline is the implementation of Gradient Boosted Trees
in sklearn, the weak non-private baseline is a single 1-Rule. It is important to note that for DP-Bag and DPBoost, we only compare our results for datasets and regimes that the corresponding hyperparameters are reported in the related works. Surprisingly, we found that \texttt{LazyBB} over 1-Rules and
differentially private logistic regression were the best performing
models --- despite being the \emph{simplest} algorithms to state,
reason about, and run.

\paragraph{\textbf{Sparsity, regularization, and interpretability.}}
Algorithms used for high-stakes decisions should be both well-audited and privacy-preserving. However, often there is a trade-off between privacy and interpretability \cite{harder2020private_interpretability}. Generally, noise injected to protect privacy harms interpretability. But our algorithms maintain accuracy under strong privacy constrains while admitting a high level of sparsity --- which facilitates interpretability. Table~\ref{tab:sparsity_measures} lists measurements across different levels of privacy. For an example of boosted one-rules at $\epsilon=0.4$ DP, see Table~\ref{tab:example_boosted_onerule}.


\texttt{DP-LR} --- another simple algorithm with excellent performance --- uses $L_2$ regularization to improve generalization. While $L_2$ regularization keeps total mass of weights relatively small, it generally assigns non-negligible weight to \emph{every} feature. Hence, the resulting model becomes less interpretable as the dimension of data grows. On other hand, \texttt{LazyBB} with 1-Rules controls sparsity by the number of rounds of boosting. Just as with non-private non-adaptive boosting algorithms, we can see this as a greedy approximation to $L_1$ regularization of a linear model \cite{DBLP:journals/jmlr/RossetZH04}. Moreover, the final model can be interpreted as a simple integral weighted voting of features. 

\begin{table}
    \centering
    \scalebox{0.9}{
    \begin{tabular}{ cccc } 
    \toprule
    $\epsilon$ & features count mean & features count std &  \% features\\  \midrule
    0.40 &  6.4 & 0.800 &  3.95\% \\ 
    0.50 & 12.8 & 0.400 &  7.90\% \\ 
    1.00 & 30.6 & 1.200 &  18.88\% \\ 
    3.00 & 72.8 & 2.481 &  44.93\% \\ 
    5.00 & 49.8 & 2.785 &  30.74\% \\ \bottomrule
\end{tabular}}
    \caption{Statistics of number of features used by \texttt{LazyBB} with \texttt{DP-1R} across different levels of privacy on adult dataset. See the Appendix for the complete table.}
\label{tab:sparsity_measures}
\end{table}

\begin{table}
    \centering
    \begin{tabular}{ cc }
 \toprule
 votes & (feature, value) \\  \midrule
 ~3 & marital-status : Married-civ-spouse  \\  
 -2 & capital-gain = 0 \\  
 ~1 & occupation : Exec-managerial\\  
 ~1 & occupation : Prof-specialty \\  
 ~1 & 13 $\leq$ education-num $\leq$ 14.5\\  
 -1 & age $\leq$ 17\\
 \bottomrule
\end{tabular}
    \caption{A $0.4$-DP model obtained by training \texttt{LazyBB} with \texttt{DP-1R} on adult dataset with $0.82$ accuracy.}
\label{tab:example_boosted_onerule}
\end{table}

\paragraph{\textbf{Pessimistic Generalization Theory.}} Empirically,
\texttt{LazyBB} generalizes well. As with AdaBoost, we could try to explain this with large margins and Rademacher complexity, which applies to any voting classifier. So, we estimated the Rademacher complexity of 1-Rules over each dataset to predict test error. The bounds are far more pessimistic than the experiments; please see the Appendix for comparison tables. Intuitively, if \texttt{LazyBB} showed larger margins on the training data than on unseen data, this would constitute a \emph{membership inference attack} --- which is ruled out by differential privacy. This motivates theoretical investigation of new techniques to guarantee generalization of differentially private models trained on small samples.

\section{Weighted Exponential Mechanism: Proof of Theorem~\ref{thm:weighted-EM}}

In this section we discuss the privacy guarantee of the Weighted
Exponential Mechanism defined in Definition~\ref{def:WEM}. Our proof
follows the same steps as the standard Exponential Mechanism
\cite{mcsherry2007mechanism}. Our goal is to prove, given two
neighboring datasets and two similar distributions on them, the
Weighted Exponential Mechanism outputs the same hypothesis with high
probability.

In what follows let $M$ denote the Weighted Exponential Mechanism, and let $\mathcal{H}=\mr{Range}(M)$. Suppose $\mc{S},\mc{S}'$ are two neighboring datasets of size $n$ and $\mu,\mu'$ are distributions over $[n]$ such that $\mathtt{d}(\mu,\mu')<\zeta$. Furthermore, let $q_{D,\mu} \colon \mathcal{H}\to \R$ be a quality score that has robust sensitivity $\Delta_\zeta$. That is, for every hypothesis $h\in \mathcal{H}$, we have
\begin{align}
\label{eq:q-delta}
    \max\limits_{\substack{D\sim D' \\ \mu,\mu': \mathtt{d}(\mu,\mu') < \zeta}}  |q_{D,\mu}(h)-q_{D',\mu'}(h)|\leq \Delta_\zeta.
\end{align}

We proceed to prove that for any $h\in \mathcal{H}$ the following holds
    \[
        \Pr[M(S,\mu) = h ] \le
        \exp(2\eta\Delta_\zeta) \Pr[M(S',\mu') = h].
    \]
Recall that $M$ outputs a hypothesis $h$ with probability proportional to $\exp(\eta\cdot q_{D,\mu})$ with $\eta = \frac{\epsprivacy}{2\Delta_\zeta}$. Let us expand the probabilities above,
    \begin{align*}
        \frac{\Pr[M(\mc{S},\mu) = h ]}{\Pr[M(\mc{S}',\mu') = h]} &= \frac{\exp{(\eta\cdot q_{\mc{S},\mu}(h))}}{\exp{(\eta\cdot q_{\mc{S}',\mu'}(h))}}\\
        &\times \frac{\sum\limits_{h\in\mc{H}}\exp{(\eta\cdot q_{\mc{S}',\mu'}(h))}}{\sum\limits_{h\in\mc{H}}\exp{(\eta\cdot q_{\mc{S},\mu}(h))}}.
   \end{align*}
    Consider the first term, then    
      \begin{align*}
        &\frac{\exp{(\eta\cdot q_{\mc{S},\mu}(h))}}{\exp{(\eta\cdot q_{\mc{S}',\mu'}(h))}}
        =\exp{(\eta[q_{\mc{S},\mu}(h)-q_{\mc{S}',\mu'}(h)])}\\
        & \leq \exp{(\eta\cdot \Delta_\zeta)}
          \tag{By~\eqref{eq:q-delta}}
    \end{align*}
    Now consider the second term, then

    \begin{align*}
        &\frac{\sum\limits_{h\in\mc{H}}\exp{(\eta\cdot q_{\mc{S}',\mu'}(h))}}{\sum\limits_{h\in\mc{H}}\exp{(\eta\cdot q_{\mc{S},\mu}(h))}} \leq \frac{\sum\limits_{h\in\mc{H}}\exp{(\eta\cdot[ q_{\mc{S},\mu}(h)+\Delta_\zeta])}}{\sum\limits_{h\in\mc{H}}\exp{(\eta\cdot q_{\mc{S},\mu}(h))}}\\
        & = 
        \frac{\exp{(\eta \Delta_\zeta)}\sum\limits_{h\in\mc{H}}\exp{(\eta\cdot q_{\mc{S},\mu}(h))}}{\sum\limits_{h\in\mc{H}}\exp{(\eta\cdot q_{\mc{S},\mu}(h))}}\\
        &=\exp{(\eta \Delta_\zeta)}
    \end{align*}
    Hence, it follows that 
    \begin{align*}
        &\frac{\Pr[M(\mc{S},\mu) = h ]}{\Pr[M(\mc{S}',\mu') = h]}\leq \exp{(\eta \Delta_\zeta)}\cdot \exp{(\eta \Delta_\zeta)}\\
        &= \exp{(2\eta \Delta_\zeta)}
    \end{align*}
    This implies that, for $\eta>0$, WEM is a $(2\eta\Delta_\zeta,0,\zeta)$-differentially private weak learner. (Note that setting $\eta=\frac{2\epsprivacy}{2\Delta_\zeta}$ yields a $(\epsprivacy,0,\zeta)$-differentially private weak learner.)

We point out that the proof for the utility guarantee of Theorem~\ref{thm:weighted-EM} is identical to the proof of the utility guarantee in standard Exponential Mechanism \cite{mcsherry2007mechanism}.

\section{Weighted Return Noisy Max: Proof of Theorem~\ref{thm:weighted-RNM}}

In this section we discuss the privacy guarantee of the Weighted Return Noisy Max defined in Section~\ref{sec:DP-Learning}. Our proof follows the same steps as the standard Return Noisy Max explained in \cite{dwork2014algorithmic} with slight modification. Our goal is to prove, given two neighboring datasets and two similar distributions on them, the WRNM outputs the same hypothesis index.

Let $f_1,\dots,f_k$ be $k$ quality functions where each $f_i:\mc{S}\times \cD(S)\to \R$ maps datasets and distributions over them to real numbers. For a dataset $S$ and distribution $\mu$ over $S$, WRNM adds independently generated Laplace noise $Lap(1/\eta)$ to each $f_i$ and returns the index of the largest noisy function i.e. $i^*=\argmax\limits_{i} (f_i+Z_i)$ where each $Z_i$ denotes a random variable drawn independently from the Laplace distribution with scale parameter $1/\eta$. In what follows let $M$ denote the WRNM.

Suppose $\mc{S},\mc{S}'$ are two neighboring datasets of size $n$ and $\mu,\mu'$ are distributions over $[n]$ such that $\mathtt{d}(\mu,\mu')<\zeta$. Furthermore, suppose each $f_i$ has robust sensitivity at most $\Delta_\zeta$. That is, for every index $i\in \{1,\dots,k\}$, we have
\begin{align}
\label{eq:f-delta}
    \max\limits_{\substack{D\sim D' \\ \mu,\mu': \mathtt{d}(\mu,\mu') < \zeta}}  |f_{i}(\mc{S},\mu)-f_{i}(\mc{S}',\mu')|\leq \Delta_\zeta.
\end{align}

Fix any $i\in\{1,\dots,k\}$. We will bound the ratio of the probabilities that $i$ is selected by $M$ with inputs $\mc{S},\mc{S}'$ and distributions $\mu,\mu'$.

Fix $Z_{-i}=(Z_1,\dots,Z_{i-1},Z_{i+1},\dots,Z_k)$, where each $Z_j\in Z_{-i}$ is drawn from $Lap(1/\eta)$. We first argue that 
\[
    \frac{\Pr[M(\mc{S},\mu)=i \mid Z_{-i}]}{\Pr[M(\mc{S}',\mu')=i \mid Z_{-i}]}\leq \mr{e}^{2\eta\cdot \Delta_\zeta} .
\]

Define $Z^*$ to be the minimum $Z_i$ such that 
\[
    f_i(\mc{S},\mu) + Z^* > f_j(\mc{S},\mu) + Z_j ~~~\forall j\neq i
\]
Note that, having fixed $Z_{-i}$, $M$ will output $i$ only if $Z_i \geq Z^*$. Recalling \eqref{eq:f-delta}, for all $j\neq i$, we have the following,
\begin{align*}
    f_i(\mc{S}',\mu')+Z^*+\Delta_\zeta \geq f_i(\mc{S},\mu)+Z^* > f_j(\mc{S},\mu)+Z_j \\\geq f_j(\mc{S}',\mu')+Z_j-\Delta_\zeta
\end{align*}
This implies that 
\begin{align*}
    f_i(\mc{S}',\mu')+Z^*+2\Delta_\zeta \geq f_j(\mc{S}',\mu')+Z_j
\end{align*}

Now, for dataset $\mc{S}'$, distribution $\mu'$, and $Z_{-i}$, mechanism $M$ selects the $i$-th index if $Z_i$, drawn from $Lap(1/\eta)$, satisfies $Z_i \geq Z^*+2\Delta_\zeta$.
\begin{align*}
    &\Pr_{Z_i\sim Lap(1/\eta)}[M(\mc{S}',\mu') = i \mid Z_{-i}]\\ 
    &\geq \Pr_{Z_i\sim Lap(1/\eta)}[Z_i \geq Z^*+2\Delta_\zeta] \\
    &\geq \mr{e}^{-(2\eta\Delta_\zeta)}\Pr_{Z_i\sim Lap(1/\eta)}[Z_i \geq Z^*] \\
    &= \mr{e}^{-(2\eta\Delta_\zeta)}\Pr_{Z_i\sim Lap(1/\eta)}[M(\mc{S},\mu) = i \mid Z_{-i}]
\end{align*}
Multiplying both sides by $\mr{e}^{(2\eta\Delta_\zeta)}$ yields the desire bound.

\begin{align*}
    \frac{\Pr_{Z_i\sim Lap(1/\eta)}[M(\mc{S},\mu) = i \mid Z_{-i}]} {\Pr_{Z_i\sim Lap(1/\eta)}[M(\mc{S}',\mu') = i \mid Z_{-i}]} \leq \mr{e}^{(2\eta\Delta_\zeta)}
\end{align*}
This implies that, for $\eta>0$, WRNM is a $(2\eta\Delta_\zeta,0,\zeta)$-differentially private weak learner. (Note that setting $\eta=\frac{2\epsprivacy}{2\Delta_\zeta}$ yields a $(\epsprivacy,0,\zeta)$-differentially private weak learner.)
\section{Weak Learner: DP 1-Rules}
\label{sec:dp-one-rules}

Throughout this section let $\mathcal{S}=\{(\mb{x}_i,y_i)\}^n$ where
$\mb{x}_i=(x_{i1},\dots , x_{ir})$ denote a dataset, and let $\mu$ be
a distribution over $[n]$. We will brute-force ``1-Rules,'' also known
as Decision Stumps \cite{Decision-Stump-L92,Decision-Stump-Holte93}. Here, these simply evaluate a single Boolean
literal such as $\neg x _{17}$ --- an input variable that may or may
not be negated. We also admit the constants \textsf{True} and
\textsf{False} as literals.

A brutally simple but surprisingly effective weak learner returns the
literal with optimal weighted agreement to the labels. For any 1-Rule
$h$ define $\mr{err}(\mc{S}, \mu, h)$ to be:

\[
  \mr{err}(\mc{S}, \mu, h) = \sum\limits_{(\mb{x}_i,y_i)\in \mc{S}}
  \mu(i)\chi\{ h(\mb{x}_i) \neq y\}.
\]

For learning 1-Rules under DP constraints, the natural approach is to
use the Exponential Mechanism to noisily select the best possible
literal. There is a small type error: the standard Exponential
Mechanism does not consider utility functions with an auxiliary
weighting $\mu$. But for weak learning we only demand privacy (close
output distributions) when \emph{both} the dataset and measures are
``close.'' When both promises hold and $\mu$ is fixed, the Exponential
Mechanism is indeed a differentially private 1-Rule learner. We show
this formally below.

\begin{obs}
\label{err-sensitivity}
  Let $\mc{S}\sim \mc{S}'$ be any two neighboring datasets and set $I=\mc{S}\cap \mc{S}'$. Then, for any two distributions $\mu,\mu'$ over $[n]$, we have
  \begin{align*}
    &\left|\mr{err}(\mc{S}, \mu, T) - \mr{err}(\mc{S}', \mu', T)\right| \\
    &=
    \begin{multlined}[t]
     \Bigg|\sum\limits_{(\mb{x}_i,y_i)\in \mc{S}} \mu(i)\chi\{ T(\mb{x}_i) \neq y\} \\
     -\sum\limits_{(\mb{x}_i,y_i)\in \mc{S}'} \mu'(i)\chi\{ T(\mb{x}_i) \neq y\}\Bigg|
    \end{multlined}
    \\ 
    & =
    \begin{multlined}[t]
     \Bigg|\sum\limits_{(\mb{x}_i,y_i)\in \mc{S}\cap\mc{S}'} [\mu(i)-\mu'(i)]\chi\{ T(\mb{x}_i) \neq y\}  \\  +\sum\limits_{(\mb{x}_i,y_i)\in \mc{S}\triangle\mc{S}'} [\mu(i)-\mu'(i)]\chi\{ T(\mb{x}_i) \neq y\}\Bigg|
    \end{multlined}\\
    & \le \sum\limits_{(\mb{x}_i,y_i)\in \mc{S}\cap\mc{S}'} |\mu(i)-\mu'(i)| +\sum\limits_{(\mb{x}_i,y_i)\in \mc{S}\triangle\mc{S}'} |\mu(i)-\mu'(i)| \\
    &= \sum\limits_{i=1}^{n} |\mu(i)-\mu'(i)|=2\mathtt{d}(\mu,\mu'). 
  \end{align*}
\end{obs}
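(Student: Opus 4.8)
The plan is to expand both copies of $\mr{err}$ and track which summands survive the subtraction. First I would write $\mr{err}(\mc{S},\mu,T)=\sum_{(\mb{x}_i,y_i)\in\mc{S}}\mu(i)\,\chi\{T(\mb{x}_i)\neq y_i\}$ and the analogous expression for $(\mc{S}',\mu',T)$, then split the index range of each sum into the records the two datasets share, $I=\mc{S}\cap\mc{S}'$, and the lone record in which they disagree, $\mc{S}\triangle\mc{S}'$. The structural point that makes this work is that for any record appearing in \emph{both} datasets the indicator $\chi\{T(\mb{x}_i)\neq y_i\}$ takes the same value in both error expressions --- it depends only on the fixed hypothesis $T$ and on the (identical) record --- so on $I$ only the weights $\mu(i)$ versus $\mu'(i)$ can differ.

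Next I would subtract the two expressions term by term. On $I$ this leaves $\sum_{i\in I}(\mu(i)-\mu'(i))\,\chi\{T(\mb{x}_i)\neq y_i\}$; on $\mc{S}\triangle\mc{S}'$ the single differing record likewise contributes a term of the form $(\mu(i)-\mu'(i))\,\chi\{T(\mb{x}_i)\neq y_i\}$, under the convention that a record absent from a dataset is given weight $0$ by that dataset's distribution, so one of $\mu(i),\mu'(i)$ is zero there. I would then take absolute values, apply the triangle inequality over all indices, and use $\chi\{\cdot\}\le 1$ to discard every indicator factor --- note that no cancellation is being exploited, only $|\chi|\le 1$. This gives $\bigl|\mr{err}(\mc{S},\mu,T)-\mr{err}(\mc{S}',\mu',T)\bigr|\le\sum_{i=1}^{n}|\mu(i)-\mu'(i)|$, after which I would invoke the finite-sample-space identity $\sum_i|\mu(i)-\mu'(i)|=2\,\mathtt{d}(\mu,\mu')$ recorded earlier (Proposition 4.2 in \cite{levin2017markov}) to reach the claimed bound $2\,\mathtt{d}(\mu,\mu')$.

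The main obstacle is the bookkeeping for the differing record: one must be explicit about the neighboring convention so that the ``missing'' half of the symmetric-difference term genuinely carries weight $0$ and is therefore already inside the $\ell_1$ sum $\sum_i|\mu(i)-\mu'(i)|$. (Under a strict ``replace one record'' convention the flipped indicator on that record could instead add a term not controlled by $\mathtt{d}(\mu,\mu')$ alone, so a smoothness cap on the weights would be needed to absorb it.) Finally I would observe that the bound is uniform over $T$ and over all pairs $\mu,\mu'$ with $\mathtt{d}(\mu,\mu')<\zeta$, so $\mr{err}(\cdot,\cdot,T)$ has robust sensitivity $\Delta_\zeta=2\zeta$ --- which is exactly what the subsequent privacy analysis of \texttt{DP-1R} via the Weighted Exponential Mechanism requires.
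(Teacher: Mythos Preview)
Your proposal is correct and follows essentially the same route as the paper: the Observation's displayed chain \emph{is} the paper's proof, and you reproduce exactly that decomposition (split into $\mc{S}\cap\mc{S}'$ and $\mc{S}\triangle\mc{S}'$, triangle inequality, $\chi\le 1$, then the $\ell_1$/total-variation identity). If anything, your discussion of the symmetric-difference bookkeeping is more careful than the paper's own treatment, which writes $[\mu(i)-\mu'(i)]\chi\{T(\mb{x}_i)\neq y\}$ on $\mc{S}\triangle\mc{S}'$ without spelling out the convention that makes this legitimate.
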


\begin{algorithm}[t]
  \caption{Differentially Private 1-Rule
    Induction($\mc{S},\mu,\eta$)}
  \label{alg:DPMDT}
  \begin{algorithmic}[1]
    \REQUIRE{Dataset $\mc{S}$, distribution $\mu$ over $[1,\dots,|\mc{S}|]$, and $\eta > 0$.}
    \STATE Let $\mathcal{H}$ be the set of all literals over $\mc{S}$
    plus the constants \textsf{True} and \textsf{False}
    \FOR{$h \in \mathcal{H}$}
    \STATE $q_{\mc{S},\mu}(h) \gets -\mr{err}(\mc{S},\mu,h)$.
    \ENDFOR
    \STATE $h_{out}\gets$ select a hypothesis $h \in \mathcal{H}$ with probability proportional to $\exp{(\eta \cdot q_{\mc{S},\mu}(h))}$
    \STATE \textbf{return} $h_{out}$
  \end{algorithmic}
\end{algorithm}

\begin{thm}\label{thm:privacy}
  Algorithm~\ref{alg:DPMDT} is a $(4 \eta \zeta,0,\zeta)$-differentially private weak learner.
\end{thm}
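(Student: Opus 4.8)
The plan is to recognize Algorithm~\ref{alg:DPMDT} as a concrete instantiation of the Weighted Exponential Mechanism and then apply Theorem~\ref{thm:weighted-EM}, so that essentially all that remains is to pin down the robust sensitivity of the quality score it uses. The algorithm draws $h_{out}$ from $\mathcal{H}$ --- the set of all literals over the Boolean features together with the constants \textsf{True} and \textsf{False} --- with probability proportional to $\exp(\eta\cdot q_{\mc{S},\mu}(h))$ where $q_{\mc{S},\mu}(h) = -\mr{err}(\mc{S},\mu,h)$; this is exactly $WEM(\eta, q_{\mc{S},\mu})$ from Definition~\ref{def:WEM}. A preliminary remark I would make is that neighboring datasets $\mc{S}\sim\mc{S}'$ differ in a single record but share the same feature set, so $\mathcal{H}$ is identical for both inputs and the range of the mechanism is fixed; hence Definition~\ref{def:ed-private-wkl} of a DP weak learner applies without any bookkeeping about changing output spaces.

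Next I would bound the robust sensitivity $\Delta_\zeta$ of $q_{\mc{S},\mu}$, which is precisely what Observation~\ref{err-sensitivity} supplies: for any neighboring $\mc{S}\sim\mc{S}'$, any distributions $\mu,\mu'$ over $[n]$, and any $h\in\mathcal{H}$,
\begin{align*}
  \bigl| q_{\mc{S},\mu}(h) - q_{\mc{S}',\mu'}(h) \bigr|
  &= \bigl|\mr{err}(\mc{S},\mu,h) - \mr{err}(\mc{S}',\mu',h)\bigr| \\
  &\le \sum_{i=1}^n |\mu(i)-\mu'(i)| = 2\,\mathtt{d}(\mu,\mu').
\end{align*}
Restricting the maximum in the definition of robust sensitivity to the promised regime $\mathtt{d}(\mu,\mu') < \zeta$ then gives $\Delta_\zeta \le 2\zeta$. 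The only place that demands attention is the inequality above: although $\mu$ and $\mu'$ are distributions over the \emph{index set} $[n]$ while the examples themselves change between $\mc{S}$ and $\mc{S}'$, each misclassification indicator $\chi\{h(\mb{x}_i)\neq y_i\}$ lies in $\{0,1\}$, so each index contributes at most $|\mu(i)-\mu'(i)|$ to the gap whether it lies in $\mc{S}\cap\mc{S}'$ or in $\mc{S}\triangle\mc{S}'$ --- this is the splitting argument already carried out in Observation~\ref{err-sensitivity}.

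Finally I would substitute $\Delta_\zeta = 2\zeta$ into Theorem~\ref{thm:weighted-EM}, which states that a Weighted Exponential Mechanism with a quality score of robust sensitivity $\Delta_\zeta$ is a $(2\eta\Delta_\zeta, 0, \zeta)$-DP weak learner; this yields $(4\eta\zeta, 0, \zeta)$-DP, as claimed. I do not anticipate a real obstacle, since both reductions are direct. The one conceptual subtlety --- that ``weighted error'' is really a family of utility functions, one per candidate $h$, rather than a single scalar utility of $(D,\mu)$ --- is harmless here: robust sensitivity, and its use in the proof of Theorem~\ref{thm:weighted-EM} via the per-hypothesis bound~\eqref{eq:q-delta}, only requires a uniform bound on how much the quality of \emph{each} fixed $h$ can change, and Observation~\ref{err-sensitivity} delivers the value $2\zeta$ across all of $\mathcal{H}$.
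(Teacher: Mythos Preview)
Your proposal is correct and follows essentially the same approach as the paper: invoke Observation~\ref{err-sensitivity} to conclude that the quality score has robust sensitivity at most $2\zeta$, then plug $\Delta_\zeta = 2\zeta$ into Theorem~\ref{thm:weighted-EM} to obtain the $(4\eta\zeta,0,\zeta)$ guarantee. Your additional remarks about the fixed hypothesis class and the per-hypothesis nature of the sensitivity bound are sound and simply make explicit what the paper leaves implicit.
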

\begin{proof}
    Suppose $\mc{S},\mc{S}'$ are two neighboring datasets of size $n$ and $\mu,\mu'$ are distributions over $[n]$ such that $\mathtt{d}(\mu,\mu')<\zeta$. Observation~\ref{err-sensitivity} tells us that the quality score $q_{\mc{S},\mu}(h)=\mr{err}(\mc{S},\mu,h)$ has robust sensitivity $2\zeta$. Hence, by Theorem~\ref{thm:weighted-EM}, we have that Algorithm~\ref{alg:DPMDT} is a $(4 \eta \zeta,0,\zeta)$-differentially private weak learner.
\end{proof}

\begin{thm}\label{thm:optimality}
   Let $h_{opt}$ denote the optimal hypothesis in $\mc{H}$. Then Algorithm~\ref{alg:DPMDT}, with probability at least $1-\beta$, returns $h_{out}\in\mc{H}$ such that
  \[
    \mr{err}(h_{out})  \leq \mr{err}(h_{opt}) +  \frac{1}{\eta}\ln{\frac{|\mc{H}|}{\beta}}.
  \]
\end{thm}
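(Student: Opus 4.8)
The plan is to invoke the utility half of Theorem~\ref{thm:weighted-EM} directly, or equivalently to replay the textbook Exponential Mechanism analysis with the concrete quality score $q_{\mc{S},\mu}(h) = -\mr{err}(\mc{S},\mu,h)$. Write $q = q_{\mc{S},\mu}$, let $\mathrm{OPT} = \max_{h \in \mc{H}} q(h) = q(h_{opt}) = -\mr{err}(h_{opt})$, and set $t = \frac{1}{\eta}\ln(|\mc{H}|/\beta)$. Since the inequality $\mr{err}(h_{out}) \le \mr{err}(h_{opt}) + t$ is equivalent to $q(h_{out}) \ge \mathrm{OPT} - t$, it suffices to show that Algorithm~\ref{alg:DPMDT} outputs some $h$ with $q(h) < \mathrm{OPT} - t$ with probability at most $\beta$.

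First I would fix an arbitrary ``bad'' hypothesis $h \in \mc{H}$, meaning $q(h) < \mathrm{OPT} - t$, and bound its selection probability. By definition of the mechanism, $\Pr[h_{out} = h] = \exp(\eta\, q(h)) / \sum_{h' \in \mc{H}} \exp(\eta\, q(h'))$. Lower-bounding the partition function in the denominator by the single term coming from $h_{opt}$, namely $\exp(\eta \cdot \mathrm{OPT})$, gives $\Pr[h_{out} = h] \le \exp(\eta(q(h) - \mathrm{OPT})) < \exp(-\eta t) = \beta/|\mc{H}|$.

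Then I would take a union bound over the at most $|\mc{H}|$ bad hypotheses, obtaining $\Pr[q(h_{out}) < \mathrm{OPT} - t] \le |\mc{H}| \cdot \beta/|\mc{H}| = \beta$; rewriting in terms of $\mr{err}$ yields the stated bound. There is no genuine obstacle here — this is the standard Exponential Mechanism utility argument — and the only points requiring a sliver of care are that the denominator must be bounded below using the contribution of the maximizer $h_{opt}$ (rather than that of $h_{out}$), and that the union bound ranges over all bad hypotheses, so the $|\mc{H}|$ factor is exactly what is cancelled by the choice of $t$.
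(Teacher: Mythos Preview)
Your proposal is correct and matches the paper's approach: the paper simply invokes the utility clause of Theorem~\ref{thm:weighted-EM} with $q_{\mc{S},\mu}(h) = -\mr{err}(\mc{S},\mu,h)$, notes that $\max_{h}q_{\mc{S},\mu}(h) = -\mr{err}(h_{opt})$, and negates to obtain the stated inequality. Your explicit replay of the exponential-mechanism union-bound argument is just an unpacking of that invocation and is also fine.
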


\begin{proof}
    By Theorem~\ref{thm:weighted-EM}, with probability at least $1-\beta$, we have
    \begin{align}
        q_{\mc{S},\mu}(h_{out}) \geq \max\limits_{h\in \mc{H}}q_{\mc{S},\mu}(h) - \frac{1}{\eta}\ln{\frac{|\mc{H}|}{\beta}}\label{utility-1}
     \end{align}
    Note that $q_{\mc{S},\mu}(h)=-\mr{err}(\mc{S},\mu,h)$ for all $h\in \mc{H}$ and $\max\limits_{h\in \mc{H}}q_{\mc{S},\mu}(h)=-\mr{err}(h_{opt})$. This gives us 
    \begin{align*}
    &-\mr{err}(h_{out})  \geq -\mr{err}(h_{opt}) -  \frac{1}{\eta}\ln{\frac{|H|}{\beta}}\implies\\
        & \mr{err}(h_{out})  \leq \mr{err}(h_{opt}) +  \frac{1}{\eta}\ln{\frac{|\mc{H}|}{\beta}}.
    \end{align*}
\qedhere
\end{proof}
As we already discussed, in order to  construct PAC learners by boosting weak learners we need weak learners that only beat random guessing on any distribution
over the training set. Here, we wish to use Algorithm~\ref{alg:DPMDT} as a weak learner. That is, we show that Algorithm~\ref{alg:DPMDT} (with high probability) is better than random guessing. In what follows we have Theorem~\ref{thm:1R-advantage} and its proof.
\begin{thm}
    Under a weak learner assumption with advantage  $\gamma$, Algorithm~\ref{alg:DPMDT}, with probability at least $1-\beta$, is a weak learner with advantage at least $\gamma -\frac{1}{\eta}\ln{\frac{|\mc{H}|}{\beta}}$. That is, for any distribution $\mu$ over $\{1,\dots,|\mc{S}|\}$, we have  
    \[
        \sum\limits_{(\mb{x}_i,y_i)\in \mc{S}} \mu(i) \chi\{ h_{out}(\mb{x}_i) \neq y\}  \leq 1/2 -\left(\gamma - \frac{1}{\eta}\ln{\frac{|\mc{H}|}{\beta}}\right).
    \]
\end{thm}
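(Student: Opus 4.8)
The plan is to combine the weak learning assumption with the utility guarantee of Theorem~\ref{thm:optimality}. First I would unpack the assumption: for the distribution $\mu$ in question, it promises that some hypothesis in $\mc{H}$ --- a signed literal or a constant --- has weighted agreement with the labels at least $1/2 + \gamma$, equivalently weighted error $\mr{err}(\mc{S},\mu,\cdot)$ at most $1/2 - \gamma$. Since $h_{opt}$ is by definition the minimizer of $\mr{err}(\mc{S},\mu,\cdot)$ over all of $\mc{H}$, this immediately yields $\mr{err}(h_{opt}) \le 1/2 - \gamma$.

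Next I would invoke Theorem~\ref{thm:optimality}, which says that with probability at least $1-\beta$ the hypothesis $h_{out}$ returned by Algorithm~\ref{alg:DPMDT} satisfies $\mr{err}(h_{out}) \le \mr{err}(h_{opt}) + \frac{1}{\eta}\ln\frac{|\mc{H}|}{\beta}$. Chaining the two bounds, on this same high-probability event,
\[
  \mr{err}(h_{out}) \le \frac{1}{2} - \gamma + \frac{1}{\eta}\ln\frac{|\mc{H}|}{\beta}
  = \frac{1}{2} - \Big(\gamma - \frac{1}{\eta}\ln\frac{|\mc{H}|}{\beta}\Big),
\]
which is exactly the claimed inequality once we recall that $\mr{err}(h_{out}) = \sum_{(\mb{x}_i,y_i)\in\mc{S}} \mu(i)\,\chi\{h_{out}(\mb{x}_i)\ne y_i\}$. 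Rearranged, this states that $h_{out}$ beats random guessing by at least $\gamma - \frac{1}{\eta}\ln\frac{|\mc{H}|}{\beta}$, i.e.\ Algorithm~\ref{alg:DPMDT} is a weak learner with that advantage with probability $\ge 1-\beta$.

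There is no genuine obstacle here: the argument is a two-step chaining of results already established. The only points needing care are bookkeeping --- (i) translating consistently between ``advantage $\gamma$'' and ``error $\le 1/2 - \gamma$'', and (ii) threading the failure probability $\beta$ and the $\frac{1}{\eta}\ln(|\mc{H}|/\beta)$ slack from Theorem~\ref{thm:optimality} (which itself rests on the utility clause of Theorem~\ref{thm:weighted-EM}) without double counting. Because the weak learning assumption is quantified over \emph{all} distributions over $\mc{S}$, the conclusion holds for whatever $\kappa$-smooth $\mu$ \texttt{LazyBB} supplies at a given round, so the advantage bound composes correctly inside the boosting loop.
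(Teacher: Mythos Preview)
Your proposal is correct and follows essentially the same approach as the paper: invoke Theorem~\ref{thm:optimality} to bound $\mr{err}(h_{out})$ in terms of $\mr{err}(h_{opt})$, use the weak learning assumption to bound $\mr{err}(h_{opt}) \le 1/2 - \gamma$, and chain. Your write-up is in fact slightly more careful than the paper's, as you explicitly justify why the existence of \emph{some} $\gamma$-advantage hypothesis forces $\mr{err}(h_{opt}) \le 1/2 - \gamma$, whereas the paper simply asserts this.
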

\begin{proof}
     By Theorem~\ref{thm:optimality}, Algorithm~\ref{alg:DPMDT} with probability at least $1-\beta$ outputs a hypothesis $h_{out}$ such that 
    \[
        \mr{err}(h_{out})  \leq \mr{err}(h_{opt}) +  \frac{1}{\eta}\ln{\frac{|\mc{H}|}{\beta}}.
    \]
    Under a \emph{weak learner assumption}, we assume that an optimal hypothesis $h_{opt}$ is at least as good as random guessing. That is $\mr{err}(h_{opt})< 1/2-\gamma$. This yields the desired result.
\end{proof}

 \section{Proof of Theorem~\ref{thm:TopDown-privacy}}
 Here we consider splitting criterion to be the Gini criterion $G(q)=4q(1-q)$. Note that this function is symmetric about $1/2$ and $G(1/2)=1$. Throughout this section,  $\mc{S}\sim\mc{S}'$ are two neighboring datasets of size $n$ and $\mu,\mu'$ are distributions over $[n]$ such that $\mathtt{d}(\mu,\mu')<\zeta$. Observe that for a decision tree $T$ we have  $|w(\ell,\mu)-w(\ell,\mu')|\leq \zeta$ and $|q(\ell,\mu)-q(\ell,\mu')|\leq \zeta$. Before proceeding to  provide an upper bound on the sensitivity of $\mr{im}_{\ell,h,\mu}$, we prove some useful lemmas.
 \begin{lem}
 \label{Lem:sens-single-node}
     The following holds.
    \begin{multlined}[t]
        4\Big|w(\ell,\mu)q(\ell,\mu)(1-q(\ell,\mu)) \\- w(\ell,\mu')q(\ell,\mu)(1-q(\ell,\mu'))\Big| \leq \frac{5}{4}\zeta
    \end{multlined}
 \end{lem}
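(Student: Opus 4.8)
The plan is a one-step telescoping (hybrid) argument. Abbreviate $w=w(\ell,\mu)$, $w'=w(\ell,\mu')$, $a=q(\ell,\mu)$, $a'=q(\ell,\mu')$, so that the goal is to bound $4\,|w\,a(1-a)-w'\,a(1-a')|$. The only inputs I would use are the two first-order stability estimates recorded just above the lemma, $|w-w'|\le\zeta$ and $|a-a'|\le\zeta$ (these hold because a leaf's weight and its label-$1$ fraction are each averages of a $\{0,1\}$-valued indicator taken against $\mu$ versus $\mu'$, and $\mathtt{d}(\mu,\mu')<\zeta$), together with the elementary bound $a(1-a)\le 1/4$ --- equivalently $G(a)=4a(1-a)\le 1$, from concavity of Gini with $G(1/2)=1$ --- and the trivial $w,w',a,a'\in[0,1]$.

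First I would insert the hybrid term $4\,w'\,a(1-a)$ and split with the triangle inequality:
\[
  4\bigl|w\,a(1-a)-w'\,a(1-a')\bigr|
  \;\le\; 4\,a(1-a)\,|w-w'| \;+\; 4\,w'\,a\,|a-a'|,
\]
where for the second term I used $(1-a)-(1-a')=a'-a$. The first summand is at most $|w-w'|\le\zeta$ since $4a(1-a)\le 1$. The second summand I would bound using $|a-a'|\le\zeta$ and the boundedness of the remaining factors, then collect the two pieces. Note that the statement places the \emph{same} factor $q(\ell,\mu)$ in both terms precisely so that, in this second summand, $q(\ell,\mu)$ can be paired with a $(1-\cdot)$ factor and the estimate $q(1-q)\le 1/4$ invoked, rather than crudely bounding that factor by $1$.

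The telescoping algebra is routine; the step I expect to need the most care --- and the main obstacle --- is the constant accounting in the second summand, since a naive triangle-inequality estimate there gives $4\zeta$ and hence only $5\zeta$ overall, whereas getting the sharper $\tfrac{5}{4}\zeta$ requires using $q(1-q)\le 1/4$ in the right place and not spending the single available factor of $\zeta$ twice. Once this lemma is in hand it feeds, via a further split over $w'a(1-a')$ versus $w'a'(1-a')$ and a sum over $\ell\in leaves(T)$, into the sensitivity bound for $\mathrm{im}_{\ell,h,\mu}$ needed for Theorem~\ref{thm:TopDown-privacy}.
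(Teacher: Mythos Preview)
Your instinct that the second summand is the sticking point is right, but the obstruction is not a missing trick: the inequality as printed is false. With $w=w'=1$, $a=q(\ell,\mu)=\tfrac12$, $a'=q(\ell,\mu')=\tfrac12-\zeta$ (so $|w-w'|=0\le\zeta$ and $|a-a'|=\zeta$),
\[
4\bigl|wa(1-a)-w'a(1-a')\bigr|=4\Bigl|\tfrac12\cdot\tfrac12-\tfrac12\bigl(\tfrac12+\zeta\bigr)\Bigr|=2\zeta>\tfrac54\zeta.
\]
Hence no decomposition can reach $\tfrac54\zeta$, and your reading of the repeated $q(\ell,\mu)$ as a deliberate device is a rationalization of a typo.

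The paper's own proof actually establishes a different inequality: it bounds
\[
\bigl|w(\ell,\mu)\,q(\ell,\mu)\,(1-q(\ell,\mu))-w(\ell,\mu')\,q(\ell,\mu')\,(1-q(\ell,\mu'))\bigr|\le\tfrac54\zeta,
\]
with $q(\ell,\mu')$ in the middle of the second term and \emph{no} leading factor of $4$. (With the $4$ restored --- i.e.\ bounding $|wG(a)-w'G(a')|$ for $G(t)=4t(1-t)$ --- the correct constant is $5\zeta$, and the $\tfrac{15}{4}\zeta$ in Lemma~\ref{lem:im-sensitivity} inherits the same factor-of-$4$ slip.)

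For this corrected target your hybrid idea works cleanly, with the split at $w'a(1-a)$:
\[
\bigl|wa(1-a)-w'a'(1-a')\bigr|\le a(1-a)\,|w-w'|+w'\bigl|a(1-a)-a'(1-a')\bigr|\le\tfrac{\zeta}{4}+|a-a'|\,|1-a-a'|\le\tfrac{\zeta}{4}+\zeta,
\]
using $a(1-a)\le\tfrac14$, $w'\le1$, and the factorization $a(1-a)-a'(1-a')=(a-a')(1-a-a')$. The paper's route is different: it first reduces to $a\le\tfrac12$ by the symmetry of Gini, fixes the sign of the difference by a second WLOG, replaces $a'$ by $a-\zeta$ via monotonicity of $t(1-t)$ on $[0,\tfrac12]$, and then expands. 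Both isolate the same $\tfrac14\zeta$ (from $|w-w'|$) and $\zeta$ (from $|a-a'|$) contributions; your telescoping avoids the case analysis and lands exactly on $\tfrac54\zeta$, whereas the paper's expansion quietly drops a $+O(\zeta^2)$ term. No further ``split over $w'a(1-a')$ versus $w'a'(1-a')$'' is needed downstream; the corrected lemma is applied three times directly in Lemma~\ref{lem:im-sensitivity}.
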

 \begin{proof}
    As the Gini criterion $G(q)=4q(1-q)$ is symmetric about $1/2$, without loss of generality, we assume $q(\ell)\leq 1/2$. Furthermore, suppose $w(\ell,\mu)q(\ell,\mu)(1-q(\ell,\mu))$ is greater than $w(\ell,\mu')q(\ell,\mu')(1-q(\ell,\mu'))$. The arguments for the other cases are analogous. 
    \begin{align*}
        & w(\ell,\mu)q(\ell,\mu)(1-q(\ell,\mu)) - w(\ell,\mu')q(\ell,\mu')(1-q(\ell,\mu'))\\
        &\leq 
        w(\ell,\mu)q(\ell,\mu)(1-q(\ell,\mu)) \\
        &\quad - w(\ell,\mu')(q(\ell,\mu)-\zeta)(1-q(\ell,\mu)+\zeta)
        \\
        & = 
        w(\ell,\mu)q(\ell,\mu)(1-q(\ell,\mu)) \\
        &\quad - w(\ell,\mu')q(\ell,\mu)(1-q(\ell,\mu)+\zeta)\\
        &\quad + w(\ell,\mu')\zeta(1-q(\ell,\mu)+\zeta)
        \\
        & \leq  w(\ell,\mu)q(\ell,\mu)(1-q(\ell,\mu)) \\
        &\quad - w(\ell,\mu')q(\ell,\mu)(1-q(\ell,\mu))+ \zeta
        \\
        &\leq |w(\ell,\mu)-w(\ell,\mu')|q(\ell,\mu)(1-q(\ell,\mu)) + \zeta\leq \frac{5}{4}\zeta
    \end{align*}
 \end{proof}
 \begin{lem}
     \label{lem:im-sensitivity}
      For a decision tree $T$ and $(\ell,h)\in leaves(T)\times F$ we have
     \[
        \Big| \mr{im}_{\ell,h,\mu}(\mc{S})-\mr{im}_{\ell,h,\mu'}(\mc{S}') \Big| \leq 4\zeta.
     \]
 \end{lem}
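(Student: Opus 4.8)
The plan is to reduce Lemma~\ref{lem:im-sensitivity} to the single-node estimate of Lemma~\ref{Lem:sens-single-node} by a short bookkeeping argument. First I would observe that replacing a leaf by a split is a \emph{local} operation on the tree: in passing from $T$ to $T(\ell,h)$, every leaf of $T$ other than $\ell$ keeps the same set of data points routed to it (and hence the same $w$ and $q$ values under any fixed distribution), while $\ell$ itself is replaced by its two new children $\ell_0,\ell_1$. Consequently, in $\mr{im}_{\ell,h,\mu}=\mc{G}(T,\mu)-\mc{G}(T(\ell,h),\mu)$ all of the common leaf contributions cancel, leaving
\[
\mr{im}_{\ell,h,\mu}= w(\ell,\mu)G(q(\ell,\mu)) - w(\ell_0,\mu)G(q(\ell_0,\mu)) - w(\ell_1,\mu)G(q(\ell_1,\mu)),
\]
a signed combination of exactly three node-terms.

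Next I would subtract the analogous expression for $(\mc{S}',\mu')$ and apply the triangle inequality, so that the signs $(+,-,-)$ wash out and
\[
\big|\mr{im}_{\ell,h,\mu}(\mc{S})-\mr{im}_{\ell,h,\mu'}(\mc{S}')\big|
\;\le\;
\sum_{\ell'\in\{\ell,\ell_0,\ell_1\}}
\big|\,w(\ell',\mu)\,G(q(\ell',\mu)) - w(\ell',\mu')\,G(q(\ell',\mu'))\,\big|.
\]
Recalling that $G(q)=4q(1-q)$, Lemma~\ref{Lem:sens-single-node} bounds the variation of each single node-term $w(\ell',\cdot)\,G(q(\ell',\cdot))$ under the change $(\mc{S},\mu)\to(\mc{S}',\mu')$ by $\tfrac{5}{4}\zeta$; summing the three such bounds gives $3\cdot\tfrac{5}{4}\zeta=\tfrac{15}{4}\zeta\le 4\zeta$, which is exactly the claim.

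The one step that needs genuine care — the main, if modest, obstacle — is verifying that Lemma~\ref{Lem:sens-single-node} really does apply to the two \emph{child} nodes $\ell_0,\ell_1$, not merely to $\ell$. Its proof rests on the per-leaf stability bounds $|w(\ell',\mu)-w(\ell',\mu')|\le\zeta$ and $|q(\ell',\mu)-q(\ell',\mu')|\le\zeta$, together with a without-loss-of-generality reduction to $q(\ell',\cdot)\le 1/2$ using the symmetry of the Gini criterion about $1/2$. Both stability bounds hold for \emph{every} leaf of \emph{every} tree — in particular for $\ell_0$ and $\ell_1$ regarded as leaves of $T(\ell,h)$ — by the observation recorded at the start of this section, since the splitting function $h\in F$ is fixed independently of the data while $\mc{S}\sim\mc{S}'$ and $\mathtt{d}(\mu,\mu')<\zeta$; edge cases in which a child receives no weight (so that $q$ is undefined) are harmless, because the corresponding node-term is then $0$ on one side and at most $\zeta$ on the other, still within $\tfrac{5}{4}\zeta$. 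With these checks in hand, the two displays above close the argument.
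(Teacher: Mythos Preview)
Your proposal is correct and follows essentially the same route as the paper: expand $\mr{im}_{\ell,h,\mu}$ as the three-term local expression $w(\ell)G(q(\ell))-w(\ell_0)G(q(\ell_0))-w(\ell_1)G(q(\ell_1))$, subtract the $(\mc{S}',\mu')$ version, apply the triangle inequality, and bound each of the three node-terms by $\tfrac{5}{4}\zeta$ via Lemma~\ref{Lem:sens-single-node} to obtain $\tfrac{15}{4}\zeta\le 4\zeta$. Your explicit verification that the single-node lemma applies to the children $\ell_0,\ell_1$ (and the edge case where a child is empty) is a nice addition that the paper leaves implicit.
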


 \begin{proof}
     For dataset $\mc{S}$ let $\mc{G}(T)=\sum\limits_{\ell\in leaves(T)}w(\ell)G(q(\ell))$. Recall the definition of $\mr{im_{\ell,h,\mu}}$,
     \begin{align*}
         \mr{im}_{\ell,h,\mu}(\mc{S}) 
         &= \mc{G}(T,\mu)-\mc{G}(T(\ell,h),\mu)\\
         & = w(\ell,\mu) G(q(\ell,\mu))-w(\ell_0,\mu)G(q(\ell_0,\mu))\\
         &-w(\ell_1,\mu)G(q(\ell_1,\mu))
     \end{align*}
     Similarly, for dataset $\mc{S}'$ let $\mc{G}(T,\mu')=\sum\limits_{\ell\in leaves(T)}w(\ell,\mu')G(q(\ell,\mu'))$. Then we have
     \begin{align*}
         & \mr{im}_{\ell,h,\mu'}(\mc{S}') = \mc{G}(T,\mu')-\mc{G}(T(\ell,h),\mu')\\
         & = w(\ell,\mu') G(q(\ell,\mu'))-w(\ell_0,\mu')G(q(\ell_0,\mu'))\\
         &\quad -w(\ell_1,\mu')G(q(\ell_1,\mu'))
     \end{align*}
     Having these we can rewrite $\Big| \mr{im}_{\ell,h,\mu}(\mc{S})-\mr{im}_{\ell,h,\mu'}(\mc{S}') \Big|$ as follows,
     \begin{align*}
         & \Big| \mr{im}_{\ell,h,\mu}(\mc{S})-\mr{im}_{\ell,h,\mu'}(\mc{S}') \Big|\\
         &= \Big|\mc{G}(T,\mu)-\mc{G}(T(\ell,h),\mu)-\mc{G}(T,\mu')+\mc{G}(T(\ell,h),\mu')\Big|\\
         & =
         \Bigg| w(\ell,\mu) G(q(\ell,\mu))-w(\ell_0,\mu)G(q(\ell_0,\mu)) \\
         & \quad -w(\ell_1,\mu)G(q(\ell_1,\mu)) -
         w(\ell,\mu')G(q(\ell,\mu'))\\
         &\quad +w(\ell_0,\mu')G(q(\ell_0,\mu'))+w(\ell_1,\mu')G(q(\ell_1,\mu'))
         \Bigg|
         \\
         &\leq 
         \Big| w(\ell,\mu) G(q(\ell,\mu))-w(\ell,\mu')G(q(\ell,\mu'))\Big|\\ &\quad+\Big|w(\ell_0,\mu')G(q(\ell_0,\mu'))-w(\ell_0,\mu)G(q(\ell_0,\mu))\Big|\\
         &\quad+\Big|w(\ell_1,\mu')G(q(\ell_1,\mu'))-w(\ell_1,\mu)G(q(\ell_1,\mu))\Big|\\
         &\leq 15/4\zeta\leq 4\zeta
     \end{align*}
     where the last inequalities follow by Lemma~\ref{Lem:sens-single-node}.
 \end{proof}

 Let us denote Algorithm~\ref{alg:DP-TopDown-DT} by $M$. Consider a fix decision tree $T$. We prove that, given $\mc{S}\sim\mc{S}'$ and $\mu,\mu'$, Algorithm~\ref{alg:DP-TopDown-DT} chooses the same leaf and split function with high probability. 

 Let $\mc{C}= leaves(T)\times F$ denote the set of possible split candidates. For each $(\ell,h)\in \mc{C}$, $\mathrm{im}_{\ell,h,\mu}(\mc{S})$ denotes the improvement gained in classification of dataset $\mc{S}$ by splitting $T$ at leaf $\ell$ according to split function $h$. Similarly, we have $\mr{im}_{\ell,h,\mu'}(\mc{S}')$. Provided that $\mathtt{d}(\mu,\mu')\leq \zeta$, by Lemma~\ref{lem:im-sensitivity}, the robust sensitivity of quality score $\mr{im}_{\ell,h,\mu}$ is at most $4\zeta$. Similar to the proof of Theorem~\ref{thm:weighted-EM} it follows that 
     \begin{align*}
         \frac{\Pr[M(\mc{S},\mu) = (\ell,h)]}{\Pr[M(\mc{S}',\mu') = (\ell,h)]} \leq \exp{(8\cdot\eta\cdot\zeta)}.
     \end{align*}

    This means each selection procedure where \texttt{DP-TopDown} selects a leaf and a splitting function is $(8\cdot\eta\cdot\zeta, 0, \zeta)$-differentially private. Using composition theorem for differentially private mechanisms, Theorem~\ref{thm:Sequential-Composition}, yields privacy guarantee 
        \[
            \tilde{\epsilon}=8t\cdot\eta\cdot\zeta 
        \]
    for the construction of the internal nodes. We use $\tilde{\epsilon}$ for labeling the leaves using Laplace Mechanism. Since the leaves partition dataset, this preserves $\tilde{\epsilon}$-differential privacy by parallel composition of deferentially private mechanisms (Theorem~\ref{thm:parallel-composition}). Overall, \texttt{TopDown-DT} is an $(16t\cdot\eta\cdot\zeta,0,\zeta)$-differentially private weak learner.
    
    \begin{rem}
        Using advanced composition for differentially private mechanisms, Theorem~\ref{thm:Sequential-Composition}, for every $\tilde{\delta}>0$ yields privacy guarantee 
        \[
            \tilde{\epsilon}_{\tilde{\delta}}=t(8\cdot\eta\cdot\zeta)^2 + 8\cdot\eta\cdot\zeta\sqrt{t\log(1/\tilde{\delta})}
        \]
        for the construction of the internal nodes. We use $\tilde{\epsilon}_{\tilde{\delta}}$ for labeling the leaves using Laplace Mechanism. Since the leaves partition dataset, this preserves $\tilde{\epsilon}_{\tilde{\delta}}$-differential privacy by parallel composition of deferentially private mechanisms (Theorem~\ref{thm:parallel-composition}). Overall, \texttt{TopDown-DT} is an $(2\tilde{\epsilon}_{\tilde{\delta}},\tilde{\delta},\zeta)$-differentially private weak learner.
    \end{rem}

\section{Approximate Differential Privacy}
\label{sec:approximate-dp}
Figures \ref{fig:lock} and \ref{fig:best} compare the cross validation
average accuracy on Adult dataset in the pure and approximate
differential privacy regimes, for two different strategies of
hyperparemeter selection: oblivious to $\epsilon$
(Figure~\ref{fig:lock}), and $\epsilon$-dependent
(Figure~\ref{fig:best}). This emphasizes the importance of tuning
hyperparameters for each choice of $\epsilon$ \emph{separately}. For
approximate differential privacy, we consider the small constant value
of $\delta = 10^{-5}$, the same as that used by DP-Bag
\cite{DP-Bagging-JordonYS19}.

\begin{figure}[ht]
\begin{center}
\centerline{\includegraphics[width=0.49\textwidth]{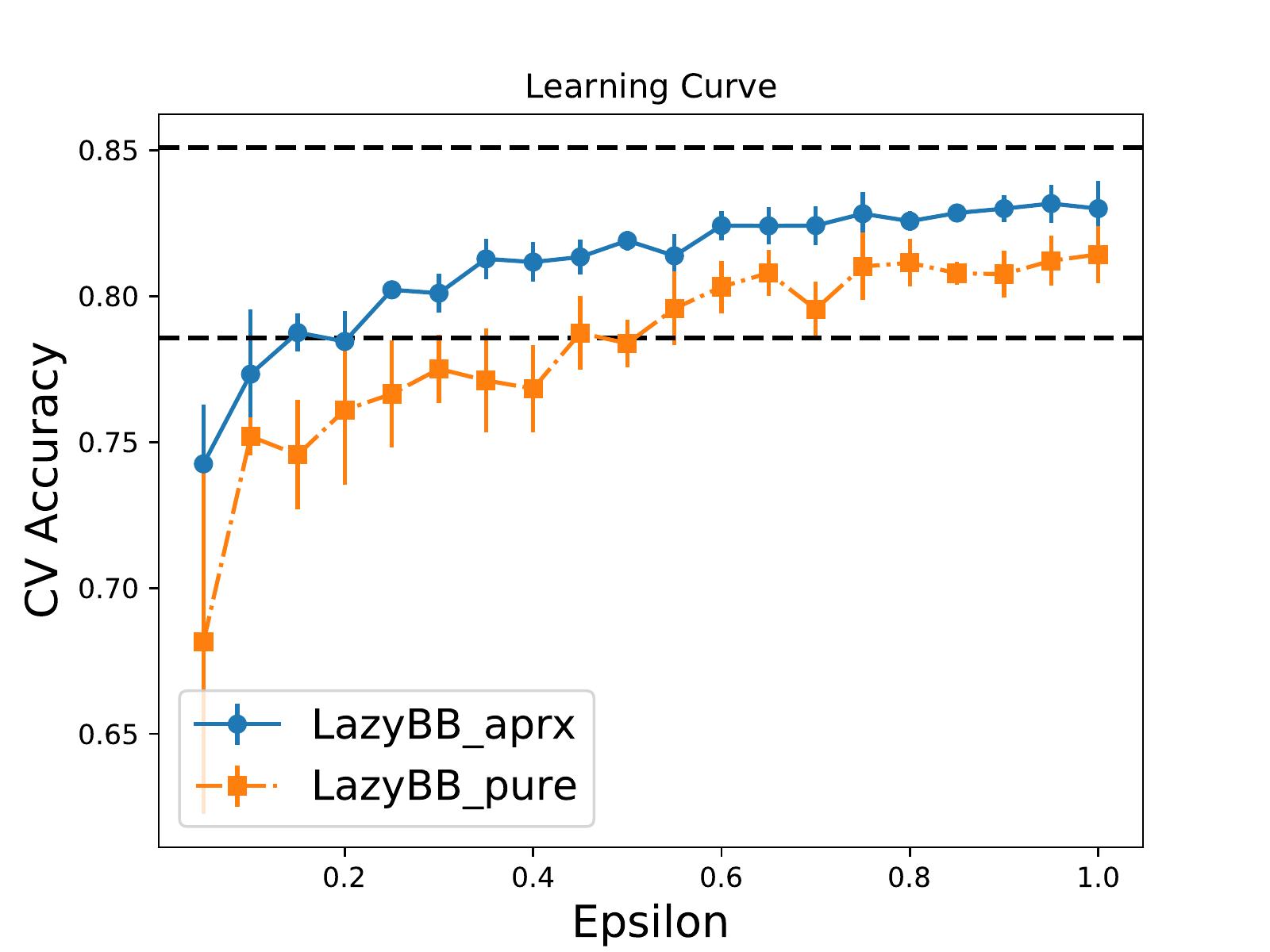}}
\caption{CV accuracy on Adult of $(\epsilon,\delta)$-DP
  \texttt{LazyBB} $(\kappa = 1/4, ~\lambda = 1/4, ~\tau = 99)$ with
  \texttt{DP-1R}, $\delta \in \{0, 10^{-5}\}$, varying $\epsilon$,
  vs. non-private baselines.}
\label{fig:lock}
\end{center}

\end{figure}

When we set hyperparameters identically for each $\epsilon$, using
approximate differential privacy can allow significantly increased
accuracy at each $\epsilon$. We found this to be the case especially
for higher $\tau$; we select $\tau = 99$ to illustrate. However, if we
are allowed to separately optimize for each $\epsilon$, the
significance of this advantage disappears. Though average accuracy
clearly improves, it is not outside one standard deviation of average
accuracy for pure differential privacy. It seems that boosted 1-Rules
are too simple to distinguish between pure and approximate
differential privacy constraints on this small dataset.

\begin{figure}[ht]

\begin{center}
\centerline{\includegraphics[width=0.49\textwidth]{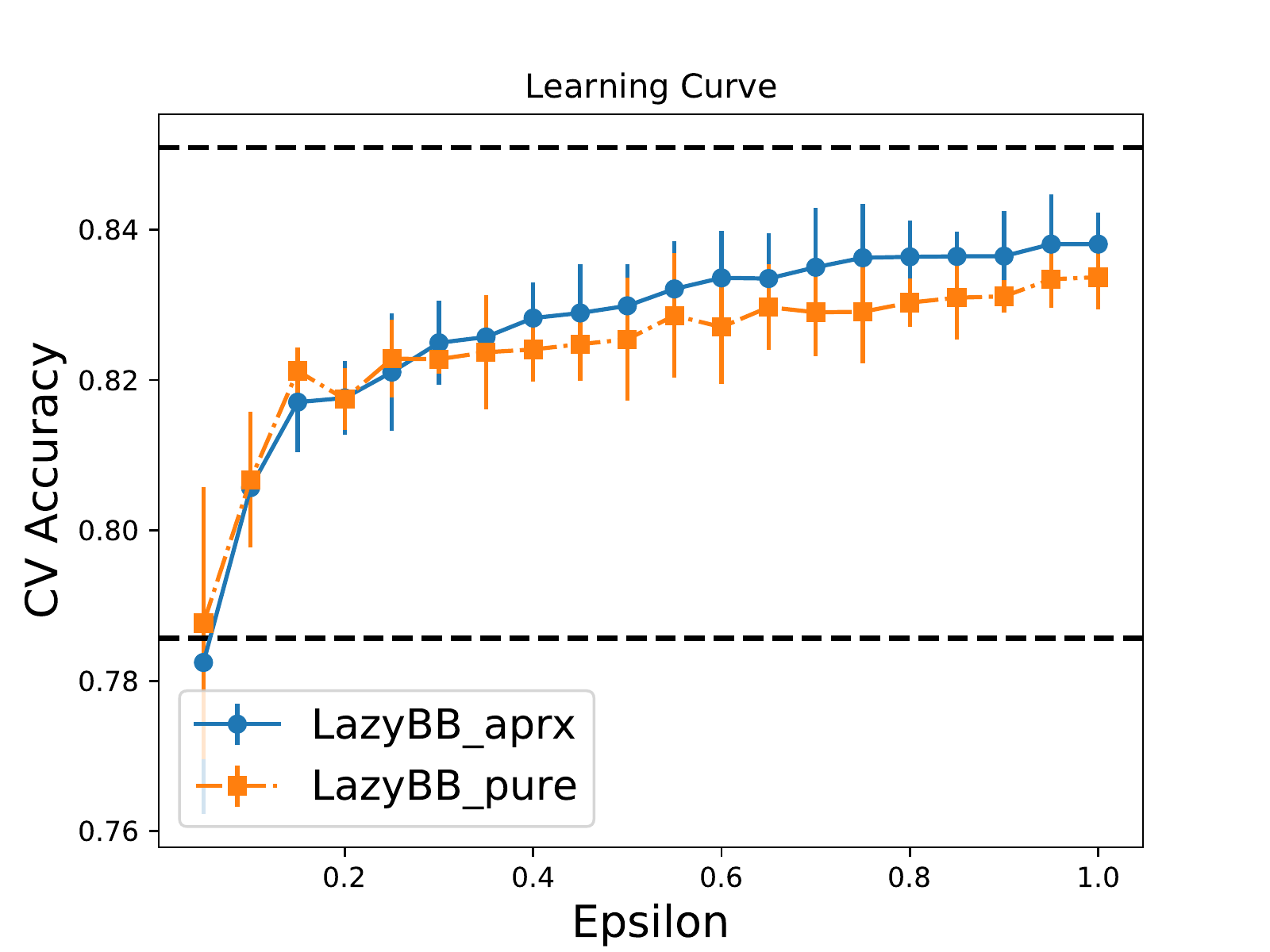}}
\caption{CV accuracy on Adult of $(\epsilon,\delta)$-DP
  \texttt{LazyBB} with \texttt{DP-1R}, $\delta \in \{0, 10^{-5}\}$,
  varying $\epsilon$, vs. non-private baselines, with best model for
  each $\epsilon$ displayed.}
\label{fig:best}
\end{center}

\end{figure}




\clearpage

\bibliography{sample}
\bibliographystyle{abbrvnat}

\newpage
\appendix

\section{Sparsity statistics of the experiments}
In Section~\ref{sec:experiments}, we discussed sparsity and interpretability of \texttt{LazyBB} with 1-Rules. Here we share the complete table of sparsity measurements for all the experiments. For each level of privacy, we use the hyper-parameter selected by cross-validation and repeated the experiment 5 times to obtain confidence bounds.
\begin{table}[ht]
    \centering
    \begin{tabular}{ cccc } 
     \toprule
     $\epsilon$ & features count mean & features count std &  \% features\\  \midrule
    0.05 & 4.6  & 0.489 &  2.83\% \\ 
    0.10 & 4.8  & 0.400 &  2.96\% \\ 
    0.15 &  3.8 & 0.400 &  2.34\% \\ 
    0.20 &  3.6 & 1.019 &  2.22\% \\ 
    0.25 &  7.0 & 0.632 &  4.32\% \\ 
    0.30 & 13.8 & 1.166 &  8.51\% \\ 
    0.35 &  7.6 & 0.489 &  4.69\% \\ 
    0.40 &  6.4 & 0.800 &  3.95\% \\ 
    0.45 & 19.2 & 2.785 &  11.85\% \\ 
    0.50 & 12.8 & 0.400 &  7.90\% \\ 
    1.00 & 30.6 & 1.200 &  18.88\% \\ 
    3.00 & 72.8 & 2.481 &  44.93\% \\ 
    5.00 & 49.8 & 2.785 &  30.74\% \\ \bottomrule
    \end{tabular}
    \caption{Sparsity measurements for Adult dataset.}
    \label{tab:sparsity_adult_full}
\end{table}
\begin{table}
    \begin{tabular}{ cccc } 
     \toprule
     $\epsilon$ & features count mean & features count std &  \% features\\  \midrule
0.05 &  6.0 &  0.632 &   7.50\%      \\
0.10 &  12.4 &  1.744 &   15.50\%      \\
0.15 &  19.6 &  1.625 &   24.50\%      \\
0.20 &  16.8 &  1.327 &   21.00\%      \\
0.25 &  11.2 &  0.748 &   14.00\%      \\
0.30 &  10.2 &  0.748 &   12.75\%      \\
0.35 &  26.4 &  1.356 &   33.00\%      \\
0.40 &  19.8 &  2.482 &   24.75\%      \\
0.45 &  34.4 &  1.497 &   43.00\%      \\
0.50 &  25.4 &  2.653 &   31.75\%      \\
1.00 &  54.2 &  3.187 &   67.75\%      \\
3.00 &  44.2 &  2.227 &   55.25\%      \\
5.00 &  32.0 &  2.098 &   40.00\%      \\

    \bottomrule
    \end{tabular}
    \caption{Sparsity measurements for Cod-RNA dataset.}
    \label{tab:sparsity_codrna_full}
\end{table}
\begin{table}
    \begin{tabular}{ cccc } 
     \toprule
     $\epsilon$ & features count mean & features count std &  \% features\\  \midrule
0.05 &  4.6 &  0.490 &   3.93\%      \\
0.10 &  7.2 &  1.166 &   6.15\%      \\
0.15 &  5.8 &  0.748 &   4.95\%      \\
0.20 &  8.6 &  1.497 &   7.35\%      \\
0.25 &  6.2 &  0.748 &   5.29\%      \\
0.30 &  5.6 &  0.490 &   4.78\%      \\
0.35 &  9.0 &  0.894 &   7.69\%      \\
0.40 &  9.8 &  1.166 &   8.37\%      \\
0.45 &  9.4 &  1.356 &   8.03\%      \\
0.50 &  11.8 &  1.720 &   10.08\%      \\
1.00 &  14.4 &  1.625 &   12.03\%      \\
3.00 &  28.8 &  2.926 &   24.61\%      \\
5.00 &  11.8 &  0.748 &   10.08\%      \\

    \bottomrule
    \end{tabular}
    \caption{Sparsity measurements for Mushroom dataset.}
    \label{tab:sparsity_mushroom_full}
\end{table}

\clearpage
\section{Hyperparameters}
These are the hyperparemeters selected by cross-validation of boosted
1-Rules over each of our datasets. The privacy vs. accuracy curves use
these settings for each value of $\epsilon$.

\begin{table}[ht]
    \centering
    \begin{tabular}{ cccc } 
     \toprule
     $\epsilon$ & density & learning rate &  no. estimators\\  \midrule
    0.05 &   0.50 &   0.50   & 5 \\
    0.10 &   0.45 &   0.50   & 5 \\
    0.15 &   0.50 &   0.40   & 5 \\
    0.20 &   0.50 &   0.30   & 5 \\
    0.25 &   0.35 &   0.50   & 9 \\
    0.30 &   0.40 &   0.40   & 19 \\
    0.35 &   0.30 &   0.45   & 9 \\
    0.40 &   0.35 &   0.50   & 9 \\
    0.45 &   0.40 &   0.45   & 25 \\
    0.50 &   0.35 &   0.50   & 15 \\
    1.00 &   0.35 &   0.45   & 39 \\
    3.00 &   0.35 &   0.45   & 99 \\
    5.00 &   0.35 &   0.45   & 75 \\
    \bottomrule
    \end{tabular}
    \caption{Hyperparameters selected by cross-validation for Adult dataset.}
 \end{table}
 \begin{table}[ht]

    \begin{tabular}{ cccc } 
     \toprule
     $\epsilon$ & density & learning rate &  no. estimators\\  \midrule
    0.05 &   0.50 &   0.50   & 9 \\
    0.10 &   0.50 &   0.35   & 19 \\
    0.15 &   0.50 &   0.50   & 29 \\
    0.20 &   0.40 &   0.50   & 25 \\
    0.25 &   0.50 &   0.45   & 25 \\
    0.30 &   0.50 &   0.45   & 25 \\
    0.35 &   0.45 &   0.35   & 49 \\
    0.40 &   0.45 &   0.45   & 39 \\
    0.45 &   0.50 &   0.40   & 65 \\
    0.50 &   0.45 &   0.50   & 49 \\
    1.00 &   0.40 &   0.50   & 99 \\
    3.00 &   0.30 &   0.40   & 99 \\
    5.00 &   0.35 &   0.40   & 99 \\
    \bottomrule
    \end{tabular}
    \caption{Hyperparameters selected by cross-validation for Cod-Rna dataset.}
  \end{table}
  \begin{table}[ht]
    
    \begin{tabular}{ cccc } 
     \toprule
     $\epsilon$ & density & learning rate &  no. estimators\\  \midrule
    0.05 &   0.45 &   0.50   & 5 \\
    0.10 &   0.50 &   0.40   & 9 \\
    0.15 &   0.50 &   0.45   & 9 \\
    0.20 &   0.50 &   0.40   & 15 \\
    0.25 &   0.30 &   0.40   & 9 \\
    0.30 &   0.35 &   0.50   & 9 \\
    0.35 &   0.40 &   0.35   & 15 \\
    0.40 &   0.45 &   0.40   & 19 \\
    0.45 &   0.35 &   0.20   & 19 \\
    0.50 &   0.45 &   0.25   & 25 \\
    1.00 &   0.25 &   0.30   & 29 \\
    3.00 &   0.20 &   0.20   & 75 \\
    5.00 &   0.20 &   0.50   & 29 \\
    \bottomrule
    \end{tabular}
    \caption{Hyperparameters selected by cross-validation for Mushroom dataset.}
    \label{tab:hyperparameters_codrna}

\end{table}

\clearpage
\section{Gap Between Theory and Experiments for Test Error}
As discussed in section~\ref{sec:experiments}, there is a large gap between lower bounds predicted by large margin theory and Rademacher complexity, and the actual performance. The following table compares the best guaranteed lower bound derived by estimated Rademacher complexity and the test accuracy. The test accuracy of Adult dataset is obtained by evaluating the model on the test set, which was not touched during training. For Cod-RNA and Mushroom dataset there is no canonical test set available, so we report cross-validation accuracy. 

\begin{table}[ht]
    \centering
    \begin{tabular}{ cccc } 
     \toprule
    Dataset & Rademacher Estimate of Test Accuracy & (CV) test accuracy\\  \midrule
    Adult &   0.37 &   0.83 \\
    Cod-Rna & 0.09 & 0.86\\
    Mushroom & 0.49 & 0.98\\
    \bottomrule
    \end{tabular}
    \caption{Comparison between Rademacher estimates of generalization performance and experimental generalization performance for boosted 1-Rules, at $\epsilon = 1$.}
    \label{tab:hyperparameters_adult}
\end{table}

\end{document}